\title[Encoding Dynamics in IRL Reward Shaping]{Enhancing Inverse Reinforcement Learning through
Encoding Dynamic Information in Reward Shaping}
\def\eqref#1{equation~\ref{#1}}
\def\1{\bm{1}}
\DeclareMathAlphabet{\mathsfit}{\encodingdefault}{\sfdefault}{m}{sl}
\SetMathAlphabet{\mathsfit}{bold}{\encodingdefault}{\sfdefault}{bx}{n}
\newcommand{\cref}[1]{\autoref{#1}}      
\newcommand{\Cref}[1]{\Autoref{#1}}      
\newcommand\boldblue[1]{\textcolor{blue}{\textbf{#1}}}
\newcommand{\tref}[1]{Theorem~\ref{#1}}
\newcommand{\aref}[1]{Assumption~\ref{#1}}
\author{
 \Name{Simon Sinong Zhan$^{*1}$} \Email{SinongZhan2028@u.northwestern.edu}\\
 \Name{Philip Wang$^{*1}$} \Email{philipwang2025@u.northwestern.edu}\\
 \Name{Qingyuan Wu$^{*2}$} \Email{qingyuan.wu@soton.ac.uk}\\ 
 \Name{Ruochen Jiao$^1$} \Email{RuochenJiao2024@u.northwestern.edu}\\
 \Name{Yixuan Wang$^1$} \Email{yixuanwang2024@u.northwestern.edu}\\
 \Name{Chao Huang$^2$} \Email{Chao.Huang@soton.ac.uk}\\ 
 \Name{Qi Zhu$^1$} \Email{qzhu@northwestern.edu}\\
 \addr $^1$  Department of Electrical and Computer Engineering, Northwestern University, USA\\
 \addr $^2$ School of Electronics and Computer Science, University of Southampton, UK\\
}
\begin{document}
\def\thefootnote{*}\footnotetext{These authors contributed equally to this work}
\maketitle


\begin{abstract}%
Adversarial-based inverse reinforcement learning (IRL) has shown promising results using reward shaping under deterministic settings, but it struggles in stochastic environments.
To address this issue, we propose a novel maximum causal entropy based off-policy IRL method with transition-aware reward shaping framework. 
Our method integrates transition model estimation directly to learn stochastic-invariant rewards. 
We conduct a thorough theoretical analysis, establishing bounds on reward error and performance differences to validate the effectiveness of our method.  
The experimental results in continuous locomotion tasks (MuJoCo) show that our method can achieve superior performance in stochastic environments and competitive performance in deterministic environments, with significant improvement in sample efficiency, compared to existing baselines. 
Additionally, we extend our framework to high-dimensional vision-based tasks, where our method shows promising results on multiple stochastic Atari games. 
These results demonstrate that embedding transition awareness into reward learning is critical for robust IRL in realistic stochastic settings.
\end{abstract}

\begin{keywords}%
  Inverse Reinforcement Learning; Reinforcement Learning; Stochastic MDP
\end{keywords}

\section{Introduction}
Reinforcement learning (RL) has achieved considerable success across various domains, including board game~\citep{schrittwieser2020mastering}, MOBA game~\citep{berner2019dota}, time-delayed system~\citep{wuboosting,wu2024variational}, and cyber-physical systems~\citep{wang2023empowering,wang2023joint,wang2023enforcing,zhan2024state}. Despite these advances, RL highly depends on the quality of reward function design which demands expertise, intensive labour, and a great amount of time~\citep{russell1998learning}. To address this, imitation learning (IL) methods, such as Behavior Cloning (BC)~\citep{torabi2018behavioral} and Inverse Reinforcement Learning (IRL)~\citep{arora2021survey}, leverage human or expert demonstrations to bypass the need for explicit reward functions. These methods aim to learn from the demonstrations to eventually match the distribution of expert behavior, and have shown great promise in applications like autonomous driving \citep{codevilla2018end,sun2018fast}, legged locomotion \citep{peng2020learning,ratliff2009learning}, and planning tasks \citep{choudhury2018data,yin2022planning}.

The notable approaches within IRL are Generative Imitation Learning~(GIL) methods that build upon maximum entropy framework~\citep{ziebart2008maximum}. 
These methods frame imitation learning as a maximum likelihood estimation problem on trajectory distributions, converting it into a Boltzmann distribution parameterized by rewards under \textbf{deterministic} environment settings~\citep{wu2024variational}. 
This closely mirrors the distribution approximation found in generative models~\citep{finn2016connection,swamy2021moments}. 
Thus, model-free deep IL approaches often follow generative model structures, such as GANs \citep{ho2016generative,fu2017learning} or diffusion models \citep{reuss2023goal,wu2024diffusing}, and require extensive sampling for distribution/score function matching in on-policy fashion \citep{orsini2021matters}. 
Model-based IL frameworks have also emerged, where model-based framework is designed to provide estimation for gradient and planning, leading to innovative combinations such as gradient-based IRL with model predictive control (MPC) \citep{das2021model} and end-to-end differentiable IRL frameworks for complex robotics tasks \citep{baram2016model, baram2017end, sun2021adversarial, rafailov2021visual}.  
However, these approaches primarily address deterministic settings and struggle when applied to stochastic environments.

Existing IRL methods, rooted in their maximum entropy nature, mostly exclusively focus on learning "deterministic" reward techniques. 
These methods, however, face significant performance degeneration in stochastic environments, leading to \textbf{risk-seeking behavior} and increased data requirements \citep{ziebart2010modeling}. 
For example, an agent trained under the deterministic Markov Decision Process (MDP) might aim to imitate expert behavior by seeking high rewards, yet fail to account for the low probability of some transitions in stochastic MDP settings. 
This happens because, in stochastic environments, the assumption of maximum entropy no longer holds. 
The trajectory distributions are not aligned with a Boltzmann distribution solely parameterized by \textbf{deterministic} rewards. 
In this case, the dynamic information must also be incorporated into the formulation. 
There are two likely solutions: One is massive sampling to cover all possible outcomes, which is computationally expensive in large state action spaces~\citep{devlin2011theoretical,gupta2022unpacking}; 
The other is changing from maximum entropy framework to maximum causal entropy framework, estimating the dynamics information, and integrating it into the reward design, making the reward "stochastic". 
Traditional reward design is usually based on state only $R(s_t)$~\citep{torabi2018generative}, state-action pair $R(s_t, a_{t})$~\citep{blonde2019sample}, or transition tuple $R(s_t, a_t, s_{t+1})$~\citep{fu2017learning}, where the information inputted can be thought as a \textbf{deterministic} sample piece under the \textbf{stochastic} setting. 
The challenge in \textbf{stochastic} environments calls for a different perspective of rewards -- stochastic rewards absorbing the transition information. 
More detailed literature survey has been has been investigated in Appendix \ref{section::related_work}.

Inspired by this idea, we propose a novel maximum causal entropy off-policy model-based adversarial IRL framework with a specifically tailored transition-aware reward shaping approach to elevate performance in stochastic environments while remaining competitive in deterministic settings. 
In contrast to existing methods, our approach leverages the predictive power of the estimated transition model to shape rewards, represented as $\hat{R}(s_t, a_t, \hat{\mathcal{T}})$. 
This also enables us to guide policy optimization and reduce dependency on costly real-world interactions in a model-based fashion. 
As part of our analysis, we provide a theoretical guarantee on the optimal behavior for policies induced by our reward shaping and derive a bound on the performance difference with respect to the transition model learning errors. 
Empirically, we demonstrate that this integration significantly enhances sample efficiency and policy performance in both settings, providing a comprehensive solution to the limitations of existing IRL methods in uncertain environments.
\textbf{Contributions of this work} include: 
\vspace{-0.25cm}
\begin{itemize}
    \item A transition-aware reward shaping formulation $\hat{R}(s_t, a_t, \hat{\mathcal{T}})$ with model estimation $\hat{\mathcal{T}}$ for stochastic MDPs, which provides the optimal policy invariance guarantee. 
    \vspace{-0.25cm}
    \item A novel model-based off-policy IRL framework rooted in \textbf{maximum causal entropy} theory that seamlessly incorporates transition model training, adversarial reward learning with model estimation and forward model-based RL process, enhancing performance in stochastic environments, and sample efficiency.
    \vspace{-0.25cm}
    \item Theoretical analysis on reward learning and performance difference under transition model learning errors within our framework.
    \vspace{-0.25cm}
    \item Empirical validation showing our approach's performance improvements in stochastic environments as well as significant gains in sample efficiency and comparable performance in deterministic environments.
\end{itemize}

We begin by introducing the necessary preliminaries on Markov Decision Processes (MDPs) and inverse reinforcement learning (IRL). We then present our model-enhanced reward shaping method, along with its theoretical guarantee. Next, we describe the full model-enhanced IRL framework, including its derivation from the maximum causal entropy objective, and provide theoretical analysis on the reward error bound and performance difference bound. Experimental results are reported on a range of MuJoCo and Atari benchmarks, demonstrating the effectiveness of our approach. We conclude with a summary of our findings.

\vspace{-0.3cm}

\section{Preliminaries}
\label{section::preliminaries}
\paragraph{MDP.}
RL usually considers a Markov Decision Process (MDP) $\mathcal{M}$~\citep{puterman2014markov} denoted as a tuple \(\langle\mathcal{S},\mathcal{A},\mathcal{T},\gamma, R, \rho_0\rangle\). $\rho_0$ is the initial distribution of the state. $s\in\mathcal{S},a\in\mathcal{A}$ stands for the state and action space respectively. $\mathcal{T}$ stands for the transition dynamic such that $\mathcal{T}:\mathcal{S}\times\mathcal{A}\times\mathcal{S}\rightarrow[0,1]$. $\gamma\in(0,1)$ is the discounted factor, $R$ stands for reward function such that $R:\mathcal{S}\times\mathcal{A}\rightarrow\mathbb{R}$ and $\|R\|_{\infty}\leq R_{\max}$. The undiscounted visitation distribution of trajectory $\tau$ with policy $\pi$ is given by $p(\tau) = \rho_0\prod_{t=0}^{T-1}\mathcal{T}(s_{t+1}|s_t,a_t)\pi(a_t|s_t).$
The objective function of RL is defined as $\max\mathbb{E}_{\tau\sim p(\tau)}\left[\sum_{t=0}^{T}\gamma^t R(\tau)-H(\pi)\right]$, where $H$ is the log likelihood of the policy. We introduce Soft Value Iteration for bellmen update~\citep{haarnoja2018soft}, where $Q^{soft}$ and $V^{soft}$ denotes the soft Q function and Value function respectively:
\[
\begin{aligned}
    \label{equation::soft_VI}
    V^{soft}(s_t) = \log\sum_{a_t\in\mathcal{A}}\exp{Q^{soft}(s_t, a_t)}, &\quad
    Q^{soft}(s_t,a_t) = R(s_t, a_t) + \gamma\mathbb{E}_{\mathcal{T}}\left[V^{soft}(s_{t+1})|s_t,a_t\right],\\
    \pi(a_t|s_t)=\exp{(Q^{soft}(s_t,a_t) - V^{soft}(s_t))}, &\quad A^{soft}(s_t,a_t)=Q^{soft}(s_t,a_t)-V^{soft}(s_t) 
\end{aligned}
\]

\paragraph{Inverse RL.} In the IRL setting, we usually consider the MDP without reward as $\mathcal{M}'$ where $R$ is also unknown. We denote the data buffer $\mathcal{D}_{exp}$ which collects trajectories from an expert policy $\pi^E$. We consider a reward function $R_\theta:\mathcal{S}\times\mathcal{A}\rightarrow\mathbb{R}$, where $\theta$ is the reward parameter. An IRL problem can be defined as a pair $\mathcal{B}=(\mathcal{M}',\pi^E)$. A reward function $R_\theta$ is feasible for $\mathcal{B}$ if $\pi^E$ is an optimal policy for the MDP $\mathcal{M}'\cup R_\theta$, and we denote the set of feasible rewards as $\mathcal{R}_{\mathcal{B}}$. 
Using the maximize likelihood estimation framework, we can formulate the IRL as the following maximum causal entropy (MCE) problem, $\arg\max_{\theta} \mathbb{E}_{\tau\sim\mathcal{D}_{exp}}\log p_\theta(\tau)$, where $Q_{R_\theta}^{soft}$ and $V^{soft}_{R_\theta}$ are based on $R_\theta$ and $p_\theta(\tau)\propto\rho_0\prod_{t=0}^{T-1}\mathcal{T}(s_{t+1}|s_t,a_t)\exp(Q^{soft}_{R_\theta}(s_t,a_t)-V^{soft}_{R_\theta}(s_t))$~\citep{ziebart2010modeling}. Under \textbf{deterministic} MDP, the above problem can be simplified as maximum entropy (ME) problem, where $p_\theta(\tau)\propto\frac{1}{Z_\theta}\exp{\sum_{t=0}^{T-1} R_\theta(s_t,a_t)}$ and $Z_\theta$ is the temperature factor of the Boltzmann Distribution~\citep{ziebart2008maximum}.

\section{Transition-Aware Reward Shaping}
\label{section::reward_shaping}

\begin{table*}[t]
\caption{This table summarizes different reward formulations and their dynamic properties. \textbf{Components} refer to the input pairs for the reward functions. \textbf{Reward Shaping} indicates whether additional physical potential information is included (\textbf{X} means none). \textbf{Dynamics} specifies if transitions are considered in the reward function.}
\centering
\scalebox{0.8}{
\begin{tabular}{c|ccc}
\hline
Methods        
    & Components
    & Reward Shaping
    & Dynamic Information
\\ \hline
AIRL(State Only)~\citep{fu2017learning}
    & $s_t$
    & $R(s_t)+\textrm{constant}$
    & X
\\
DAC~\citep{kostrikov2018discriminator}
    & $s_t, a_t$
    & X
    & X
\\
SAM~\citep{blonde2019sample}
    & $s_t,a_t$
    & X
    & X
\\
SQIL~\citep{reddy2019sqil}
    & $s_t,a_t$
    & binary 
    & X
\\
GAIfO~\citep{torabi2018generative}
    & $s_t, s_{t+1}$
    & X
    & single sample
\\
Ours
    & $s_t, a_t, \mathcal{T}$
    & $R(s_t,a_t)+\gamma\mathbb{E}_{\mathcal{T}}[\phi(s_{t+1})|s_t,a_t]-\phi(s_t)$
    & transition model
    
\\
\hline
\end{tabular}
}
\label{reward_shaping_table}
\end{table*}
In this section, we illustrate the advantages of involving transition dynamics into the reward shaping, especially in stochastic MDP settings. 
Existing work has proposed various formulations and definitions~(\cref{reward_shaping_table}), but few considers transition dynamic information in the reward shaping. 
Defining rewards solely based on states, $R^{s}(s_t)$, offers limited utility in environments where actions are critical. 
Even though the state-action pair-based rewards $R^{sa}(s_t, a_t)$ can capture the missing information of the action taken, it fails to consider future information, the successive state $s_{t+1}$. 
Transition tuple-based rewards $R^{tuple}(s_t, a_t, s_{t+1})$ incorporate dynamic information in a sampling-based way, which requires abundant data to learn the underlying relationship of two consecutive states, potentially raising the sample efficiency issue in the stochastic environment with the huge state space. 
To address this issue, we propose transition-aware reward shaping $\hat{R}(s_t,a_t,\mathcal{T})$, which explicitly infuses dynamic information $\mathcal{T}$ on the potential function, thus significantly improving sample efficiency. 
Specifically, our reward shaping is defined as 
\begin{equation}
\label{def::transition_reward}
   \hat{R}(s_t,a_t,\mathcal{T})=R(s_t,a_t)+\gamma\mathbb{E}_{\mathcal{T}}\left[\phi(s_{t+1})\vert s_t,a_t\right]-\phi(s_t), 
\end{equation}
where $\phi$ is a state-only potential function, $\mathcal{T}$ is the dynamics. Another insight of the above reward shaping is to resemble the advantage function with the soft value function as the optimal potential function, which we will elaborate on next section. With the given reward shaping $\hat{R}$, it is crucial to show that it induces the same optimal behavior as the original reward $R$. We formally define this policy invariance property as follows. 
\begin{definition}\citep{memarian2021self}
\label{def::soft_optimal_policy}
    Let $R$ and $\hat{R}$ be two reward functions. We say they induce the same soft optimal policy under transition dynamics $\mathcal{T}$ if, for all states $s\in\mathcal{S}$ and actions $a\in\mathcal{A}$:$A_{R}^{soft}(s_t,a_t) = A_{\hat{R}}^{soft}(s_t,a_t).$
\end{definition}
With the above definition, we can transfer the proof of policy invariant property of our designed reward shaping (\cref{def::transition_reward}) to showing the equivalence of soft advantage functions, which is proved in the following theorem. The detailed proof can be found in Appendix~\ref{proof::soft_optimal_policy}.
\begin{theorem}[Policy Invariance]
\label{thm::soft_optimal_policy}
    Let $R$ and $\hat{R}$ be two reward functions. $R$ and $\hat{R}$ induce the same soft optimal policy under all transition dynamics $\mathcal{T}$ if $\hat{R}(s_t,a_t,\mathcal{T})=R(s_t,a_t)+\gamma\mathbb{E}_{\mathcal{T}}[\phi(s_{t+1})\vert s_t,a_t] - \phi(s_t)$ for some potential-shaping function $\phi:\mathcal{S}\rightarrow\mathbb{R}$.
\end{theorem}

\tref{thm::soft_optimal_policy} implies that the optimal policy induced from our transition-aware reward shaping $\hat{R}$ (\cref{def::transition_reward}) is equivalent to the optimal policy trained by the ground-truth reward function $R$ under the soft value iteration fashion.

\section{Model Enhanced IRL}
\label{section::algorithm}
\begin{figure*}[!t]
    \centering
    \includegraphics[width=0.9\linewidth]{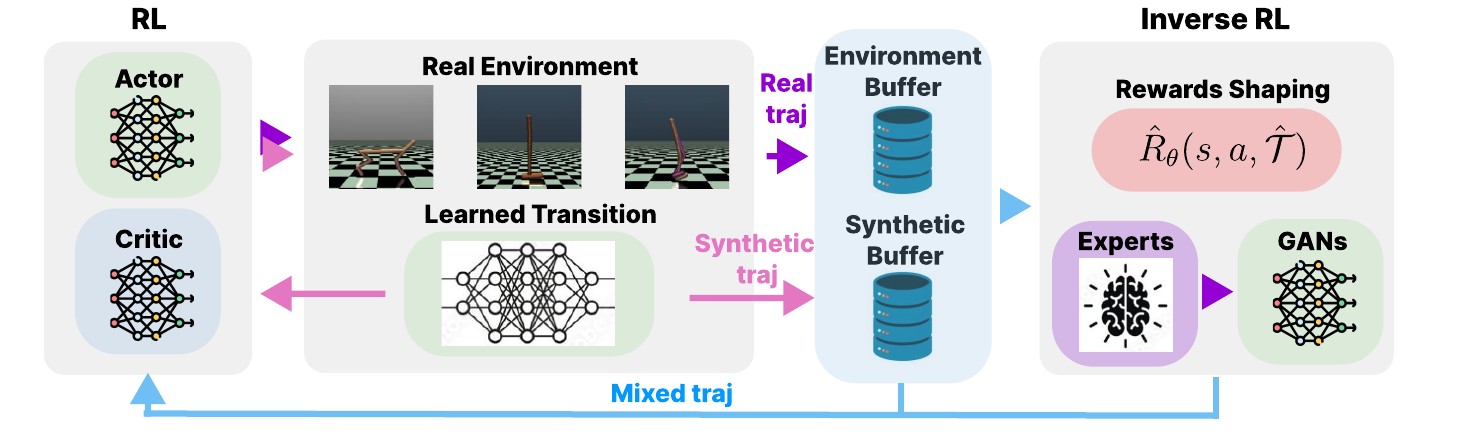}
    \caption{Framework overview of Model-Enhanced Adversarial IRL. Different color arrows stand for different sample flows. \textcolor{Purple}{Purple} stands for real environmental interaction samples, \textcolor{Rhodamine}{pink} stands for synthetic samples generated from learned transition model, and \textcolor{Cyan}{blue} stands for mixed of both.}
    \label{fig::framework}
\end{figure*}
In this section, we first elaborate on the adversarial training of our reward shaping (\cref{equation::discriminator}) and present the theoretical insight (\cref{prop::gradient_align}) of the equivalence between cross-entropy training loss of adversarial reward shaping formulation and maximum log-likelihood loss of original maximum causal entropy IRL problem. 
Then, we showcase our practical algorithm framework with trajectory generation, transition model learning, and policy optimization in the loop, as shown in ~\cref{fig::framework}. 
Furthermore, we theoretically investigate the reward function bound (\tref{thm:reward_func_Error_Bound}) and performance difference bound (\tref{thm:value_error_bound}) under the transition model learning error.

\subsection{Adversarial Formulation of Reward Shaping}
\label{section::adversarial_reward}
In this section, we connect the reward shaping in the adversarial training framework with rewards learning objective under the MCE framework. Inspired by GANs~\citep{goodfellow2014generative}, the idea behind the adversarial framework is to train a binary discriminator $D(s_t,a_t,s_{t+1})$ or $D(s_t, a_t)$ to distinguish state-action-transition samples from an expert and those generated by imitator policy following the original ME setting. However, as mentioned above, we only take in state-action pair and transition function to define our reward function which also extends to our discriminator as:
\begin{equation}
\label{equation::discriminator}
    D_\theta(s_t,a_t,\mathcal{T})=\frac{\exp\{f_{\theta}(s_t,a_t,\mathcal{T})\}}{\exp\{f_{\theta}(s_t,a_t,\mathcal{T})\}+\pi(a_t\vert s_t)},
\end{equation}
where $f_\theta(s_t,a_t,\mathcal{T})=R_\theta(s_t,a_t)+\gamma\mathbb{E}_{\mathcal{T}}\left[\phi_\theta(s_{t+1})|s_t,a_t\right]-\phi_\theta(s_t)$ resembles the reward shaping defined above. The loss function for the training discriminator is defined below.
\begin{equation}
\label{equation::disc_loss}
    \mathcal{L}_{disc}=-\mathbb{E}_{\mathcal{D}_{exp}}\left[\log D_{\theta}(s,a,\mathcal{T})\right]-\mathbb{E}_{\pi}\left[\log(1-D_\theta(s,a,\mathcal{T}))\right].
\end{equation}
We bridge this adversarial formulation with the original MCE IRL problem. 
In the following proposition, we give a sketch of the proof to show the connection between the objective function of the discriminator and MCE IRL. 
Proof details can be found in Appendix \ref{proof::gradient alignment}.
\begin{proposition}
\label{prop::gradient_align}
    Consider an undiscounted MDP. Suppose $f_\theta$ and $\pi$ at the current iteration are the soft-optimal advantage function and policy for reward function $R_\theta$. Minimising the cross-entropy loss of the discriminator under generator $\pi$ is equivalent to maximising the log-likelihood under Maximum Causal Entropy IRL. 
\end{proposition}
With the above proposition, we can construct an intuition that $f^*_\theta$ should be equal to $\hat{R}_\theta$ the reward shaping we introduced early and resemble the soft advantage function. To extract rewards to represent reward used for policy optimization, we use $\log(D_\theta(s,a,\mathcal{T}))-\log(1-D_\theta(s,a,\mathcal{T}))$, which resembles the entropy-regularized reward shaping $f_\theta(s,a,\mathcal{T})-\log\pi(a|s)$. 

\subsection{Algorithm Framework}
\label{section::algorithm_framework}
In this section, we present the overall framework of Model-Enhanced IRL and illustrate how transition model training is incorporated into the learning loop. We use ensemble gaussian dynamic models to adopt the transition~\citep{yu2020mopo,yu2021combo,rigter2022rambo}. The transition model is updated in each policy optimization iteration similar as model-based RL approaches~\citep{janner2019trust,hansen2022temporal,zhan2024state}. At each iteration, the updated transition model is utilized for reward learning and synthetic data generation in eval mode, which is stored in the synthetic trajectory replay buffer. Unlike AIRL and GAIL, our framework operates in an \textbf{off-policy} fashion, where samples used for both discriminator and policy update are drawn from a combination of the environmental replay buffer and the synthetic replay buffer. An overview of our framework is shown in ~\cref{fig::framework}, and detailed algorithmic steps and parameters are provided in Algorithm \ref{alg:meairl}.  

\paragraph{Sample Efficiency:}
\label{section:sample_efficiency} To improve sample efficiency, we leverage the estimated transition model to generate $H$-steps synthetic trajectories data alongside real interaction data, facilitating policy optimization. Given that the estimated transition model is inaccurate at the beginning, we employ a dynamic ratio between real and synthetic data to prevent the model from being misled by unlikely synthetic transitions~\citep{janner2019trust,zhan2024state}. Specifically, early-stage generated trajectories are not stored persistently, unlike real interactions which are fully stored in the off-policy environmental replay buffer. To maintain training stability, we use a synthetic replay buffer with a size that gradually increases as training progresses, ensuring a balanced inclusion of synthetic data over time. The growth rates of the data ratio and buffer size are adjusted based on the complexity of the transition model learning process and can be fine-tuned via hyper-parameters. Details can be found in Appendix \ref{sec::implementation}.

\paragraph{Distribution Shift:} To mitigate distribution shift~\citep{lee2020addressing, lin2020model} during training, we employ a strategy involving the learned transition model. Typically, during interaction, the real state $s_t$ is used as input to the actor, and the resulting action $a_t$ is applied in the environment. To incorporate the transition model, we predict a synthetic state $\hat{s}_t$ from previous $s_{t-1}$ and $a_{t-1}$. This generated $\hat{s}_t$ is then fed into the actor to produce action $\hat{a}_t$. The actions $a_t$ and $\hat{a}_t$ are mixed and applied to the environment with a certain ratio, and the resulting pairs $(s_t, a_t)$ or $(s_t, \hat{a}_t)$ are stored in the environmental replay buffer. This approach helps balance the exploration of real and model-predicted dynamics, reducing the impact of distributional discrepancies. 

\subsection{Performance Analysis}
\label{section::performance_gap}
In this section, we analyze the optimal performance bound in the presence of transition model learning errors. Our results show that as the transition model error approaches zero, the performance difference at the optimal point vanishes at the same time. The learned transition model $\hat{\mathcal{T}}$ persists in some errors compared with the ground-true transition dynamic. In this section, we investigate how this error will affect performance of our method. extending original representation, we define an IRL problem as $\mathfrak{B}=(\mathcal{M}',\pi^E)$, where $\mathcal{M}'$ is a MDP without $R$ and $\pi^E$ is an optimal expert policy. We denote $\mathcal{R}_{\mathfrak{B}}$ as the set of feasible rewards set for $\mathfrak{B}$. Since under our case $\mathcal{T}$ is approximated by $\hat{\mathcal{T}}$, we have another IRL problem defined as $\hat{\mathfrak{B}}=(\hat{\mathcal{M}'},\pi^E)$ where $\hat{\mathcal{M}}'$ has the same state and action space, discount factor, and initial distribution but an estimated transition model $\hat{\mathcal{T}}$. For notation, we use \( D_{\text{TV}} \) to denote the total variation distance, \( \Vert \cdot \Vert \) to represent the infinity norm (with \( \infty \) omitted for simplicity), \( \vert \mathcal{S} \vert \) to denote the cardinality of the state space, and \( V_{\mathcal{M}' \cup R}^{\pi^*} \) to represent the value function of policy \( \pi^* \) under the MDP \( \mathcal{M}' \) with reward \( R \), and vice versa.
\begin{assumption}[Transition Model Error]
\label{assumption:transition_error}
    Since the transition model is trained through a supervised fashion, we can use a PAC generalization bound~\citep{shalev2014understanding} for sample error. Therefore, we assume that the total variation distance between $\mathcal{T}$ and $\hat{\mathcal{T}}$ is bounded by $\epsilon_\mathcal{T}$ through $[0,T]$: $\max_t\mathbb{E}_{s\sim\pi_{D,t}}\left[D_{TV}(\mathcal{T}(s'|s,a)\vert\hat{\mathcal{T}}(s'|s,a))\right]\leq\epsilon_\mathcal{T}$, which is a common assumption adopted in literature~\citep{janner2019trust,sikchi2022learning}.  
\end{assumption} 
Next, based on the assumed total visitation bound on transition models (\aref{assumption:transition_error}), we aim to propagate this bound to the reward learning process through our model-infused reward shaping. Specifically, the error bound arises from using the approximated transition model \(\hat{\mathcal{T}}\) in our reward shaping, instead of the true but inaccessible transition model \(\mathcal{T}\). 
\begin{theorem}[Reward Function Error Bound]
\label{thm:reward_func_Error_Bound}
    Let $\mathfrak{B}=(\mathcal{M}',\pi^*)$ and $\hat{\mathfrak{B}}=(\hat{\mathcal{M}'},\pi^*)$ be two IRL problems with transition functions $\mathcal{T}$ and $\hat{\mathcal{T}}$ respectively, then for any $R^E\in\mathcal{R}_{\mathfrak{B}}$ there is a corresponding $\hat{R}^E\in\mathcal{R}_{\hat{\mathfrak{B}}}$ such that 
    \begin{equation*}
        \Vert R^E-\hat{R}^E\Vert\leq\frac{\gamma}{1-\gamma}|\mathcal{S}|\epsilon_{\mathcal{T}}R_{\max}
    \end{equation*}
\end{theorem}
Proof of ~\tref{thm:reward_func_Error_Bound} can be found in \cref{proof:reward_error_bound}. With rewards bound above, we can extend the bound to the final performance, which represents the value functions difference brought up by estimated transition model error under RL setting.   
\begin{theorem}[Performance Difference Bound]
\label{thm:value_error_bound}
    The performance difference between the optimal policies ($\pi^*$ and $\hat{\pi}^*$) in corresponding MDPs ($\mathcal{M}'\cup R$ and $\hat{\mathcal{M}}'\cup \hat{R}$) can be bounded as follows:
    \begin{equation*}
        \Vert V_{\mathcal{M}'\cup R^E}^{\pi^*} - V_{\hat{\mathcal{M}}'\cup \hat{R}^E}^{\hat{\pi}^*} \Vert\leq
        \epsilon_\mathcal{T}\left[ \frac{\gamma}{(1-\gamma)^2} R_{\max}+\frac{1+\gamma}{(1-\gamma)^2}R_{\max} |\mathcal{S}|\right].\\
    \end{equation*}
\end{theorem}
The detailed proof of ~\tref{thm:value_error_bound} is presented in \cref{proof:value_error_bound}. The above theorem highlights the relationship between the performance difference and the transition model error, also implying that a perfectly-learned transition model ($\epsilon_\mathcal{T} \rightarrow 0$) could make the performance difference negligible. We also extend an ablation study to empirically validate this performance bound in ablation studies in \cref{section:ablation_model}, where we can observe a \textit{linear-kind} tendency illustrating in above theorem.

\section{Experiments}
\label{section::experiment}
We evaluate our Model-Enhanced IRL algorithm on the MuJoCo and Atari\texttt{100k} benchmarks~\citep{mujoco, bellemare13arcade, kaiser2019model} with an NVIDIA RTX 4090 GPU and Intel Core-i9 13900K CPU. Our algorithm is tested against other Adversarial Imitation Learning (AIL) methods, including on-policy algorithms GAIL \citep{ho2016generative}, AIRL \citep{fu2017learning}, and the off-policy method Discriminator Actor-Critic (DAC) \citep{kostrikov2018discriminator} across multiple MuJoCo environments under both stochastic and deterministic settings. We use Proximal Policy Optimization (PPO) \citep{schulman2017proximal} for both GAIL and AIRL, and Soft Actor-Critic (SAC) \citep{haarnoja2018soft} for DAC as the forward RL optimization respectively. In addition, we apply our transition-aware reward shaping framework to the high-dimensional vision-based Atari\texttt{100k} task and compare it with CNN-AIRL \citep{tucker2018inverse}. Complete learning curves of all methods are provided in Supplementary Material. The experiments are designed to highlight the key advantages of our framework:
\begin{itemize}
    \vspace{-0.25cm}
    \item \textbf{Performance in Stochastic Environments:} In stochastic settings, our method significantly outperforms other approaches. This enhanced ability to learn under uncertainty is attributed to our framework’s effectiveness in leveraging model-based predictiton capability.
    \vspace{-0.25cm}
    \item \textbf{Sample Efficiency} Our method can consistently reach expert performance with fewer training steps even under limited expert demonstrations when most of the baselines fail to extract reward signals. 
    \vspace{-0.25cm}
    \item \textbf{Performance in Deterministic Environments:} We demonstrate that our method is on par with existing baselines' performances in deterministic settings.
    \vspace{-0.15cm}
\end{itemize}

\paragraph{MuJuCo Experiment Setup.}
We choose 5 different MuJoCo continuous control tasks listed in \cref{performance_table_stoch}. To simulate stochastic dynamics, we introduce the agent-unknown random Gaussian noise with a mean of 0 and a standard deviation of 0.5 to the environmental interaction steps. All the expert trajectories are collected by an expert agent trained with standard SAC~\citep{haarnoja2018soft} under deterministic or stochastic MuJoCo environments.
Each algorithm is trained with 100k environmental steps and evaluated every 1k steps for \texttt{InvertedPendulum-v4} and \texttt{InvertedDoublePendulum-v4}. For \texttt{Hopper-v4}, \texttt{HalfCheetah-v4}, and \texttt{Walker2d-v4}, AIRL and GAIL are trained with 10M steps with evaluations every 100k steps, whereas DAC and our algorithm are trained with 1M environmental steps with evaluations every 10k steps. Each experiment is repeated with five random seeds, and we vary the number of expert demonstrations from 5 to 1000 to demonstrate robustness. 
\paragraph{Stochastic MuJoCo.}
\label{stoch_performance}
\begin{table*}[t]
\centering
\caption{Best performance of expert and all algorithms in \textbf{stochastic} MuJoCo Environments under conditions of different numbers of expert trajectories provided (10, 100, and 1000). AIRL and GAIL are trained with \texttt{10M} environmental steps. DAC and Ours are trained with \texttt{1M} environmental steps.}
\scalebox{0.72}{
\begin{tabular}{c|c|ccccc}
\hline
Environment 
& \# Expert Trajactories 
& Expert 
& GAIL 
& AIRL
& DAC                   
& Ours                
\\ \hline
InvertedPendulum-v4        & 10       
& $1000.0_{\pm 0.0}$     
& $1000.0_{\pm 0.0}$     
& $1000.0_{\pm 0.0}$    
& $1000.0_{\pm 0.0}$    
& $1000.0_{\pm 0.0}$ \\
                           & 100      
& $1000.0_{\pm 0.0}$     
& $1000.0_{\pm 0.0}$     
& $1000.0_{\pm 0.0}$    
& $1000.0_{\pm 0.0}$    
& $1000.0_{\pm 0.0}$ \\
                           & 1000     
& $986.09_{\pm 95.97}$   
& $1000.0_{\pm 0.0}$     
& $1000.0_{\pm 0.0}$    
& $1000.0_{\pm 0.0}$    
& $1000.0_{\pm 0.0}$ \\ \hline
InvertedDoublePendulum-v4  & 10       
& $131.3_{\pm 77.0}$   
& $155.0_{\pm 58.1 }$  
& $163.0_{\pm 48.6 }$   
& $100.6_{\pm 11.8 }$
& \boldblue{$193.4_{\pm 15.5 }$} \\
                           & 100      
& $108.0_{\pm 43.2}$   
& $167.2_{\pm 26.6 }$    
& $151.2_{\pm 28.6 }$   
& $94.5_{\pm 9.9 }$    
& \boldblue{$198.1_{\pm 76.3 }$} \\
                           & 1000     
& $140.44_{\pm 76.62}$   
& \boldblue{$189.5_{\pm 28.8 }$}   
& $150.2_{\pm 18.3 }$   
& $105.6_{\pm 20.4 }$   
& \boldblue{$182.2_{\pm 29.6 }$} \\ \hline
Hopper-v4                  & 10       
& $1786.0_{\pm 803.0}$ 
& $1266.9_{\pm 366.2 }$   
& $2092.2_{\pm 57.4 }$  
& $1000.4_{\pm 5.3 }$
& \boldblue{$2408.4_{\pm 641.7 }$} \\
                           & 100      
& $1489.6_{\pm 659.6}$ 
& $2385.9_{\pm 350.0 }$  
& $2789.9_{\pm 30.8 }$  
& $993.1_{\pm 10.5 }$  
& \boldblue{$2820.9_{\pm 89.8 }$} \\
                           & 1000     
& $1516.0_{\pm 692.6}$ 
& $2746.5_{\pm 270.9 }$   
& $2744.3_{\pm 37.4 }$  
& $2007.1_{\pm 719.7 }$  
& \boldblue{$2858.8_{\pm 76.9}$} \\ \hline
HalfCheetah-v4             & 10       
& $1567.4_{\pm 74.1}$
& $368.5_{\pm 53.7 }$  
& $463.9_{\pm 61.2 }$
& $9.5_{\pm 457.2 }$ 
& \boldblue{$888.6_{\pm 67.3 }$} \\
                           & 100      
& $1120.5_{\pm 67.5}$
& $398.1_{\pm 123.5 }$  
& $556.0_{\pm 12.8 }$   
& $615.9_{\pm 250.5 }$
& \boldblue{$1108.3_{\pm 13.9 }$} \\
                           & 1000     
& $1113.5_{\pm 76.1}$  
& $735.6_{\pm 44.0 }$   
& $708.7_{\pm 14.5 }$ 
& $1046.4_{\pm 13.9 }$
& \boldblue{$1162.8_{\pm 62.2 }$} \\ \hline
Walker2d-v4                & 10       
& $3109.4_{\pm 1031.5}$ 
& $1262.8_{\pm 396.3 }$  
& $1170.5_{\pm 484.0 }$
& $101.4_{\pm 149.1 }$
& \boldblue{$2509.0_{\pm 860.0 }$} \\
                           & 100      
& $3295.4_{\pm 704.0}$ 
& $956.4_{\pm 313.2 }$  
& $1740.7_{\pm 609.8 }$
& $416.1_{\pm 243.2 }$ 
& \boldblue{$3311.0_{\pm 157.2 }$} \\
                           & 1000     
& $3268.9_{\pm 746.1}$ 
& $1430.6_{\pm 489.8 }$
& $3051.3_{\pm 210.5 }$
& \boldblue{$3531.3_{\pm 105.3 }$}
& \boldblue{$3497.8_{\pm 51.7 }$}
\\ \hline
\end{tabular}
}
\label{performance_table_stoch}
\end{table*}

In ~\cref{performance_table_stoch}, we present the performance of our method and baselines in stochastic MuJoCo environments with varying numbers of expert trajectories. \textbf{Our method consistently achieves the best performance across the majority of these environments, outperforming all baselines under different levels of expert trajectory availability}. In simple environments, such as \texttt{InvertedPendulum-v4}, the introduction of stochasticity and variations in expert trajectory have minimal impact on the final performance for both our method and the baselines. However, for more complex environments, the effect of stochasticity becomes more pronounced. Specifically, in \texttt{InvertedDoublePendulum-v4}, stochasticity notably degrades performance. Our method, however, maintains a competitive edge over all baselines, achieving better results with limited expert trajectories (10 and 100) and reaching similar performance to the baselines when more expert trajectories are available. In other environments with higher dimensional state space, our method substantially outperforms all baselines, especially when fewer expert trajectories are provided. These results indicate that our approach can effectively recover the reward function more closely from demonstrations in stochastic environments, resulting in significant performance improvement. To demonstrate our model's robustness to different levels of randomness and stochasticity, we further conduct an ablation study that can be found in Supplementary Material. Notably, the performance of DAC decreases significantly under stochastic settings, potentially due to DAC's ineffective reward formulation on state-action pairs, which also result in training instability shown in learning curves section in supplement. 

\paragraph{Sample Efficiency.}
\label{sample_efficiency}
In \cref{appendix:learning_curves}, we display the sample efficiency across various environments with different numbers of expert trajectories. Based on results, \textbf{our method shows significant superiority in sample efficiency across all of the environments under stochastic settings. This advantage becomes more apparent with limited expert trajectories.} Specifically, for \texttt{Hopper-v4}, \texttt{HalfCheetah-v4}, and \texttt{Walker2d-v4} our method is the only approach capable of reaching expert-level performance consistently with limited number of expert trajectories. GAIL and AIRL both fail to reach the expert within \texttt{1M} environmental training steps. DAC was able to reach expert performance only when expert trajectories are sufficient, though it still suffers from sample inefficiency and training instability. 
For \texttt{InvertedPendulum-v4}, all methods can achieve expert-level performance except DAC, which exhibits instability with limited demonstrations. In \texttt{InvertedDoublePendulum-v4},  introducing stochasticity into the dynamics makes it challenging for all algorithms to achieve reasonable performance from noisy expert demonstrations while DAC completely fails to reach expert-level performance. 
We also observe a universal trend across all stochastic environments: as the number of expert trajectories increases, both the sample efficiency and performance of all methods improve accordingly. In order to determine the specific contributions of our model-based policy optimization and model-infused reward shaping, we investigate their individual impacts on sample efficiency as detailed in ablation study in Supplementary Material. The results indicate that both components contribute significantly to the overall performance, with the model-based policy optimization playing a crucial role in enhancing sample efficiency, especially in environments with complex dynamics and high-dimensional state spaces.

\begin{table*}[!t]
\centering
\caption{Best performance of expert and all algorithms in \textbf{deterministic} MuJoCo Environments with 1000 expert trajectories provided. DAC and our methods are trained for \texttt{1M} environmental steps. GAIL and AIRL are trained for \texttt{10M} environmental steps.}
\scalebox{0.85}{
\begin{tabular}{c|ccccc}
\hline
Environment
    & Expert
    & GAIL
    & AIRL
    & DAC
    & Ours          
\\ \hline
InvertedPendulum-v4
    & $1000.0_{\pm 0.0}$
    & $1000.0_{\pm 0.0}$
    & $1000.0_{\pm 0.0}$
    & $1000.0_{\pm 0.0}$
    & $1000.0_{\pm 0.0}$
\\
InvertedDoublePendulum-v4
    & $9356.7_{\pm 0.2}$
    & $9324.4_{\pm 0.4 }$
    & $355.3_{\pm 76.3 }$
    & \boldblue{$9359.8_{\pm 0.1 }$}
    & \boldblue{$9359.8_{\pm 0.1 }$}
    \\
Walker2d-v4
    & $4520.7_{\pm 648.44}$
    & $3387.0_{\pm 617.8 }$
    & $3623.6_{\pm 189.6 }$
    & \boldblue{$4655.3_{\pm 126.4 }$}
    & $4396.7_{\pm 147.4 }$
\\
Hopper-v4
    & $3262.8_{\pm 314.4}$
    & $3420.8_{\pm 77.9 }$
    & $3385.8_{\pm 50.5 }$
    & $3481.6_{\pm 94.6 }$
    & \boldblue{$3506.6_{\pm 23.5 }$}
\\
HalfCheetah-v4
    & $13498.6_{\pm 710.9}$
    & $3502.6_{\pm 202.3}$
    & $ 3237.8_{\pm 85.5}$
    & \boldblue{$ 10102.2_{\pm 297.6}$}
    & $ 6509.8_{\pm 177.7}$
\\
\hline
\end{tabular}
}
\label{performance_table_det}
\end{table*}

\paragraph{Deterministic MuJoCo.}
The performance of deterministic MuJoCo environments can be found in ~\cref{performance_table_det}. 
For most tasks with deterministic dynamics, our method can achieve similar performance as the baselines and the expert. For \texttt{HalfCheetah-v4}, our method exceeds AIRL and GAIL, but fail to reach the similar level as DAC and expert. As the dynamic becomes complicated, ensemble dynamic model with our fixed hyper-parameters might not be able to fully capture the transition info leading to high compounding transition model error, which can cause the performance deficit. Our theoretical analysis supports this finding, and we will explore the efficacy of different dynamic model structures or more effective hyper-parameters setting for future works. Generally, \textbf{our method shows competitive performance with the baselines in the deterministic environments.}

\paragraph{Stochastic Atari Task.}

\begin{wraptable}{r}{5cm}
\caption{Evaluation return after \texttt{400k} steps on visual Atari.}
\resizebox{5cm}{!}{
    \begin{tabular}{@{}lcc@{}}
    \toprule
    Method & SpaceInvaders & BattleZone \\
    \midrule
    Expert      & $445.5_{\pm 129.0}$    & $22300.0_{\pm 3662.0}$ \\
    CNN-AIRL    & $148.0_{\pm 73.2}$     & $1200.0_{\pm 748.3}$ \\
    Ours        & \textbf{$323.0_{\pm 181.5}$} & $14400.0_{\pm 5607.1}$ \\
    \bottomrule
    \end{tabular}
}
\label{tab:atari_table}
\end{wraptable} 

In addition to the continuous MuJoCo locomotion suite, we further evaluate our reward shaping framework on two high-dimensional visual observation based Atari 2600 task, built on the Arcade Learning Enivonrment (ALE) simulator \citep{bellemare13arcade}. 
The environment is naturally stochastic with the built-in sticky actions. We employ the standard Atari\texttt{100k} evaluation protocol where the algorithms are trained for 400k steps, 100k interactions considering action repeat \citep{kaiser2019model}. We obtain the expert trajectories by training Dreamer-v2 \citep{hafner2020mastering} for \texttt{400k} steps and sample 10 trajectories afterwards. In order to capture the high-dimensional observation information, we substitute original ensemble model with the Recurrent State-Space Model (RSSM) to encode the dynamics. \cref{tab:atari_table} shows our method comparing to CNN-AIRL \citep{tucker2018inverse}, an adapted IRL framework that preprocesses the image input through CNN then use it for recovering rewards. From the result, our method can recover expert behavior to some extent under the high-dimensional stochastic environment. More details of this experiment can be found in Supplementary Materials.

\vspace{-0.5cm}
\section{Conclusion}
\label{section::conclusion}
In this paper, we presented a novel off-policy model-enhanced IRL framework starting from Maximum Causal Entropy theory by introducing transition-aware reward shaping, specifically designed to enhance performance in stochastic environments with significant sample efficiency improvement comparing to existing approaches and maintain competitive performance in deterministic setting. The theoretical analysis provides guarantees on the optimal policy invariance under the transition-aware reward shaping and highlights the relationship between performance difference and transition model's estimation error. Empirical evaluations on MuJoCo and Atari benchmark environments validate the effectiveness of our method. Future works will focus on extending to various generative structure for reward learning and exploring extensions of the framework to multi-agent and hierarchical reinforcement learning scenarios. Overall, our approach offers a promising direction for advancing model-based adversarial IRL, with the potential to scale to a broader range of real-world applications.

\acks{We thank a bunch of people.}

\bibliography{l4dc2026-sample}
\newpage


\appendix


\vspace{-0.2cm}
\section{Related Works}
\label{section::related_work}
\paragraph{Generative IRL.}
Margin optimization based IRL methods \citep{ng2000algorithms, abbeel2004apprenticeship, ratliff2006maximum} aim to learn reward functions that explain expert behavior better than other policies by a margin. Bayesian approaches were introduced with different prior assumptions on reward distributions, such as Boltzmann distributions \citep{ramachandran2007bayesian, choi2011map, chan2021scalable} or Gaussian Processes \citep{levine2011nonlinear}. To avoid biases from maximum margin methods, \citep{ziebart2008maximum,ziebart2010modeling} proposed a Lagrangian dual framework to cast the reward learning into a maximum likelihood problem with linear-weighted feature-based reward representation. \cite{wulfmeier2015maximum} extended the framework to nonlinear reward representations, and \cite{finn2016guided} combined importance sampling techniques to enable model-free estimation. Inspired by GANs, generative methods were introduced for policy and reward learning in IRL~\citep{ho2016generative, fu2017learning, torabi2018generative}. However, \textbf{these methods typically work with Maximum Entropy (ME) formulation yet suffer from sample inefficiency and stochasticity}. Although there have been efforts to combine generative methods with off-policy RL agents to improve sample efficiency \citep{kostrikov2018discriminator, blonde2019sample, blonde2022lipschitzness}, few extend it to the model-based setting which might further the improvement, and none of these approaches addresses the rewards learning in stochastic MDP.
\paragraph{MBIRL.}
Integrating IRL with MBRL has also shown success. For example, \cite{das2021model} and \cite{herman2016inverse} presented a gradient-based IRL approach using different policy optimization methods with dynamic models for linear-weighted features reward learning. In \cite{das2021model}, the dynamic model is used to pass forward/backward the gradient in order to update the IRL and policy optimization modules. Similarly, end-to-end differentiable adversarial IRL frameworks to various state spaces have also been explored \citep{baram2016model, baram2017end, sun2021adversarial, rafailov2021visual}, where dynamic model serves a similar role. Despite these advancements, existing methods rarely address the specific challenges posed by stochastic environments, which limit reward learning performance. 
\paragraph{Reward Shaping.}
Reward shaping~\citep{dorigo1994robot,randlov1998learning} is a technique that enhances the original reward signal by adding additional domain information, making it easier for the agent to learn optimal behavior. This can be defines as $\hat{R}=R+F$, where $F$ is the shaping function and $\hat{R}$ is the shaped reward function. Potential-based reward shaping (PBRS)~\citep{ng2000algorithms} builds the potential function on states, $F(s,a,s')=\phi(s')-\phi(s)$, while ensuring the policy invariance property, which refers to inducing the same optimal behavior under different rewards $R$ and $\hat{R}$. Nonetheless, there exists other variants on the inputs of the potential functions such as state-action~\citep{wiewiora2003principled}, state-time~\citep{devlin2012dynamic}, and value function~\citep{harutyunyan2015expressing} as potential function input. There are also some recent attempts of reward shaping without utilization of domain knowledge potential function to solve exploration under sparse rewards~\citep{hu2020learning,devidze2022exploration,gupta2022unpacking,skalse2023invariance}.


\newpage
\appendix
\section{Reward Shaping Soft Optimal Policy}
\label{proof::soft_optimal_policy}
\begin{theorem}
    Let $R$ and $\hat{R}$ be two reward functions. $R$ and $\hat{R}$ induce the same soft optimal policy under all transition dynamics $\mathcal{T}$ if $\hat{R}(s_t,a_t)=R(s_t,a_t)+\gamma\mathbb{E}_{\mathcal{T}}[\phi(s_{t+1})\vert s_t,a_t] - \phi(s_t)$ for some potential-shaping function $\phi:\mathcal{S}\rightarrow\mathbb{R}$.
\end{theorem}

\begin{proof}
According to Soft VI (~\cref{equation::soft_VI}), we can expand the representation of $Q^{soft}_{\hat{R}}(s_t,a_t)$ as follows.
    \begin{align*}
        Q^{soft}_{\hat{R}}(s_t,a_t) &= R(s_t,a_t)+\gamma\mathbb{E}_{\mathcal{T}}\left[\phi(s_{t+1})\vert s_t, a_t\right]-\phi(s_t)+\gamma\mathbb{E}_{\mathcal{T}}\left[V_{\hat{R}}^{soft}(s_{t+1})\vert s_t, a_t\right],\\
        Q^{soft}_{\hat{R}}(s_t,a_t) + \phi(s_t) &= R(s_t,a_t) + \gamma\mathbb{E}_{\mathcal{T}}\left[V_{\hat{R}}^{soft}(s_{t+1})+\phi(s_{t+1})\vert s_t, a_t\right],\\
        Q^{soft}_{\hat{R}}(s_t,a_t) + \phi(s_t) &= R(s_t,a_t) + \gamma\mathbb{E}_{\mathcal{T}}\left[\log\sum_{a\in\mathcal{A}}\exp\left(Q_{\hat{R}}^{soft}(s_{t+1},a)+\phi(s_{t+1})\right)\vert s_t, a_t\right].
    \end{align*}
    From above derivation, we can tell that $Q^{soft}_{\hat{R}}(s_t,a_t) + \phi(s_t)$ satisfy the soft bellmen update with original $R$. Thus, with simple induction, we can arrive that $Q_{R}^{soft}(s_t,a_t) = Q_{\hat{R}}^{soft}(s_t,a_t)+\phi(s_t)$.
    Then, we can derive the advantage function
    \begin{align*}
        A_{\hat{R}}^{soft}(s_t,a_t) &= Q_{\hat{R}}^{soft}(s,a) - V_{\hat{R}}^{soft}(s_t)\\
        &= Q_{\hat{R}}^{soft}(s_t,a_t)-\log\sum_{a\in\mathcal{A}}\exp\left(Q_{\hat{R}}^{soft}(s_t,a_t)\right)\\
        &= Q_{\hat{R}}^{soft}(s_t,a_t) + \phi(s_t) -\log\sum_{a\in\mathcal{A}}\exp\left(Q_{\hat{R}}^{soft}(s_t,a_t) + \phi(s_t)\right)\\
        &= Q_{R}^{soft}(s_t,a_t)-\log\sum_{a\in\mathcal{A}}\exp\left(Q_{R}^{soft}(s_t,a_t)\right)\\
        &= A_{R}^{soft}(s_t,a_t).
    \end{align*}
\end{proof}

\section{Adversarial Reward Learning}
\label{proof::gradient alignment}
\begin{proposition}
    Consider an undiscounted MDP. Suppose $f_\theta$ and $\pi$ at current iteration are the soft-optimal advantage function and policy for reward function $R_\theta$. Minimising the cross-entropy loss of the discriminator under generator $\pi$ is equivalent to maximising the log-likelihood under Maximum Causal Entropy IRL. 
\end{proposition}

\begin{proof}
\begin{align*}
    \mathcal{L}_{IRL}(\mathcal{D}_{exp},\theta) &= \mathbb{E}_{\mathcal{D}_{exp}}[\log p_\theta(\tau)]\\
    &= \mathbb{E}_{\mathcal{D}_{exp}}\left[\sum_{t=0}^{T-1}\log\pi(a_t|s_t)+\log\rho_0+\sum_{t=1}^T\log\mathcal{T}(s_{t+1}|s_t,a_t)\right]\\
    &= \mathbb{E}_{\mathcal{D}_{exp}}\left[\sum_{t=0}^{T-1}\left(Q^{soft}_\theta(s_t,a_t)-V^{soft}_\theta(s_t)\right)\right] + \text{constant}.
\end{align*}
Breaking down above equations with soft VI (\cref{equation::soft_VI}), we can arrive the following.
\begin{equation}
    \mathbb{E}_{\mathcal{D}_{exp}}\left[\sum_{t=0}^{T-1} R_\theta(s_t,a_t)\right]+\mathbb{E}_{\mathcal{D}_{exp}}\left[\sum_{t=0}^{T-2}\mathbb{E}_{\mathcal{T}}\left[V_\theta^{soft}(s_{t+1})|s_t,a_t\right]\right]-\mathbb{E}_{\mathcal{D}_{exp}}\left[\sum_{t=0}^{T-1} V_\theta^{soft}(s_t)\right].
\end{equation}
Next we will derive the gradient of the loss.
\begin{multline}
    \nabla_{\theta}\mathcal{L}(\mathcal{D}_{exp},\theta) = \underbrace{\nabla_{\theta}\mathbb{E}_{\mathcal{D}_{exp}}\left[\sum_{t=0}^{T-1} R_{\theta}(s_t,a_t)\right]}_{A} + \\\underbrace{\nabla_{\theta}\mathbb{E}_{\mathcal{D}_{exp}}\left[\sum_{t=0}^{T-2}\left(\mathbb{E}_{\mathcal{T}}\left[V_{\theta}^{soft}(s_{t+1})\vert s_t,a_t\right]\right) -V_{\theta}^{soft}(s_{t+1})\right]}_{B} - \underbrace{\nabla_{\theta}V_{\theta}^{soft}(s_0)}_C.
\end{multline}
Let's get explicit expression of each part.
\begin{align*}
    A &= \mathbb{E}_{\mathcal{D}_{exp}}\left[\sum_{t=0}^{T-1}\nabla_{\theta}R_{\theta}(s_t,a_t)\right]\\
    C &=\nabla_\theta\log\sum_{a_t\in\mathcal{A}}\exp{Q_\theta^{soft}(s_t,a_t)}\\
    &=\sum_{a_t\in\mathcal{A}}\pi(a_t\vert s_t)\nabla_\theta Q_{\theta}^{soft}(s_t,a_t)\\
    & = \mathbb{E}_{\pi}\left[\sum_{t=0}^{T-1}\nabla_\theta R_{\theta}(s_t,a_t)\right].
\end{align*}
In our case, the transition function $\mathcal{T}$ is estimated by an approximation function $\hat{\mathcal{T}}$, which is updated  with samples from $\mathcal{D}_{exp}$ and samples from off-policy buffer $\mathcal{D}_{env}$, thus we can drop the $\mathbb{E}_{\mathcal{T}}$ here. And $B$ term will cancel out, ending up to $0$. To summarize, the gradient of log MLE loss of MCE IRL is the following.
\begin{equation}
    \nabla_\theta\mathcal{L}_{IRL}(\mathcal{D}_{exp},\theta)=\mathbb{E}_{\mathcal{D}_{exp}}\left[\sum_{t=0}^{T-1}\nabla_{\theta}R_{\theta}(s_t,a_t)\right] -\mathbb{E}_{\pi}\left[\sum_{t=0}^{T-1}\nabla_\theta R_{\theta}(s_t,a_t)\right].
\end{equation}
Next, we will start to derive the gradient of cross-entropy discriminator training loss. Remember the discriminator loss is defined in Eq~\ref{equation::disc_loss}.
\begin{align*}
    \log D_\theta(s,a,\mathcal{T}) &= f_\theta(s,a,\mathcal{T})-\log(\exp\{f_\theta(s,a,\mathcal{T})\}+\pi(a|s)),\\
    \log(1-D_\theta(s,a,\mathcal{T})) &= \log\pi(a|s)-\log(\exp\{f_\theta(s,a,\mathcal{T})\}+\pi(a|s)).
\end{align*}
Then, the gradient of each term is as follow:
\begin{align*}
    \nabla_\theta\log D_\theta(s,a,\mathcal{T}) &= \nabla_\theta f_\theta(s,a,\mathcal{T})-\frac{\exp\{f_\theta(s,a,\mathcal{T})\}\nabla_\theta f_\theta(s,a,\mathcal{T})}{\exp\{f_\theta(s,a,\mathcal{T})\}+\pi(a|s)},\\
    \nabla_\theta\log(1-D_\theta(s,a,\mathcal{T})) &= -\frac{\exp\{f_\theta(s,a,\mathcal{T})\}\nabla_\theta f_\theta(s,a,\mathcal{T})}{\exp\{f_\theta(s,a,\mathcal{T})\}+\pi(a|s)}.
\end{align*}
Since $\pi$ is trained by using $f_\theta$ as shaped reward, from soft VI we can derive that $\pi^*_{f_\theta}(a|s)=\exp A^{soft}_{f_\theta}(s,a)$. By assumption, we assume that $f_\theta$ is the advantage function of $R_\theta$, $f_\theta(s,a)=A^{soft}_{R_\theta}(s,a)$. From \tref{thm::soft_optimal_policy}, we know that $A^{soft}_{R_\theta}(s,a)=A^{soft}_{f_\theta}(s,a)$, which also implies that $\pi^*_{f_\theta}=\pi^*_{R_\theta}$. Then, we can deduce the gradient of the loss of discriminator.
\begin{align*}
    -\nabla_\theta\mathcal{L}_{disc} &= \mathbb{E}_{\mathcal{D}_{exp}}\left[\nabla_\theta\log D_\theta(s,a,\mathcal{T})\right]+\mathbb{E}_{\pi}\left[\nabla_\theta\log(1-D_\theta(s,a,\mathcal{T}))\right]\\
    &= \mathbb{E}_{\mathcal{D}_{exp}}\left[\frac{1}{2}\nabla_\theta f_\theta(s,a,\mathcal{T})\right]-\mathbb{E}_{\pi}\left[\frac{1}{2}\nabla_\theta f_\theta(s,a,\mathcal{T})\right],\\
    -2\nabla_\theta\mathcal{L}_{disc} &= \mathbb{E}_{\mathcal{D}_{exp}}\left[\nabla _\theta f_\theta(s,a,\mathcal{T})\right] - \mathbb{E}_{\pi}\left[\nabla_\theta f_\theta(s,a,\mathcal{T})\right].
\end{align*}
\end{proof}

\section{Performance Gap Analysis}
\begin{lemma}[Implicit Feasible Reward Set~\citep{ng2000algorithms}]
Let $\mathfrak{B}=(\mathcal{M}',\pi^*)$ be an IRL problem. Then $R\in\mathcal{R}_\mathfrak{B}$ if and only if for all $(s,a)\in\mathcal{S}\times\mathcal{A}$ the following holds:
\begin{align*}
    Q_{\mathcal{M}'\cup R}^{\pi^*}(s,a)-V_{\mathcal{M}'\cup R}^{\pi^*}(s) &= 0\quad \textrm{if}\; \pi^*(a|s)>0,\\
    Q_{\mathcal{M}'\cup R}^{\pi^*}(s,a)-V_{\mathcal{M}'\cup R}^{\pi^*}(s) &\leq 0\quad \textrm{if}\; \pi^*(a|s)=0.
\end{align*}
\end{lemma}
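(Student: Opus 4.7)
The plan is to establish the biconditional by proving each direction separately. The lemma is the classical characterization of feasible rewards in IRL: $R$ makes $\pi^*$ optimal for the MDP $\mathcal{M}' \cup R$ if and only if $\pi^*$ satisfies the Bellman optimality conditions with respect to its own induced Q-function under $R$. Throughout, I will use $Q^{\pi^*}$ and $V^{\pi^*}$ as shorthand for $Q_{\mathcal{M}' \cup R}^{\pi^*}$ and $V_{\mathcal{M}' \cup R}^{\pi^*}$.

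For the forward direction, I assume $R \in \mathcal{R}_\mathfrak{B}$, so that $\pi^*$ is an optimal policy for $\mathcal{M}' \cup R$. By Bellman's principle of optimality, this is equivalent to the identity $V^{\pi^*}(s) = \max_{a \in \mathcal{A}} Q^{\pi^*}(s,a)$ for every $s \in \mathcal{S}$. For any action $a$ with $\pi^*(a|s) = 0$, the inequality $Q^{\pi^*}(s,a) - V^{\pi^*}(s) \leq 0$ then follows immediately. For any action $a$ with $\pi^*(a|s) > 0$, I invoke the expectation identity $V^{\pi^*}(s) = \sum_{a'} \pi^*(a'|s)\, Q^{\pi^*}(s,a')$; this writes $V^{\pi^*}(s)$ as a convex combination of values all bounded above by $V^{\pi^*}(s)$, which forces every term in the support of $\pi^*(\cdot|s)$ to attain the upper bound exactly, yielding $Q^{\pi^*}(s,a) - V^{\pi^*}(s) = 0$.

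For the reverse direction, I assume the two advantage conditions. Together they imply $V^{\pi^*}(s) = \max_a Q^{\pi^*}(s,a)$ for every $s$, since actions in the support of $\pi^*(\cdot|s)$ achieve this value while the remaining actions do not exceed it. Combined with the Bellman expectation equation for $\pi^*$, this recovers the Bellman optimality equation. Because the Bellman optimality operator is a contraction with a unique fixed point, and any policy greedy with respect to that fixed point is optimal, $\pi^*$ must be optimal for $\mathcal{M}' \cup R$, so $R \in \mathcal{R}_\mathfrak{B}$.

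The conceptually delicate step is the convex-combination argument in the forward direction, which is what upgrades the generic inequality $Q^{\pi^*}(s,a) \leq V^{\pi^*}(s)$ into the equality required on the support of $\pi^*$. Once that observation is in place, both implications follow by direct appeal to Bellman optimality, so there is no genuine technical obstacle beyond bookkeeping. I would state the argument once carefully for a fixed state $s$ and then conclude by quantifying over all states and action pairs.
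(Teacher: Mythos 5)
Your argument is correct. Note that the paper does not actually prove this lemma: it is imported by citation from Ng et al.\ (2000) and used as a black box to set up the explicit reward parameterization in the next lemma, so there is no in-paper proof to compare against. Your two directions are the standard ones and both are sound: the forward direction correctly combines $V^{\pi^*}(s)=\max_a Q^{\pi^*}(s,a)$ (from optimality of $\pi^*$) with the convex-combination identity $V^{\pi^*}(s)=\sum_{a'}\pi^*(a'|s)Q^{\pi^*}(s,a')$ to force equality on the support, and the reverse direction correctly observes that the two advantage conditions turn the Bellman expectation equation into the Bellman optimality equation, whence $V^{\pi^*}=V^*$ by uniqueness of the fixed point of the ($\gamma$-contractive) optimality operator and $\pi^*$ is optimal because it only places mass on maximizing actions. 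The only caveat worth flagging is that the max-attainment and convex-combination steps implicitly assume a finite (or at least compact, with measurable selection) action space; this is consistent with the paper's finite-$|\mathcal{S}|$ setting elsewhere, but you should state the assumption if you write the proof out in full.
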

Combined with the traditional Value Iteration of RL, we can write out the explicit form of the reward function $R$.
\begin{lemma}[Explicit Feasible Reward Function~\citep{metelli2021provably}]
\label{lemma:feasi_reward}
With the above lemma conditions, $R\in\mathcal{R}_\mathfrak{B}$ if and only if there exist $\xi\in\mathbb{R}_{\geq 0}^{\mathcal{S}\times\mathcal{A}}$ and value function $V\in\mathbb{R}^\mathcal{S}$ such that:
\begin{equation}\label{equation::explicit_reward}
    R(s,a) = V(s) - \gamma \sum_{s'\in\mathcal{S}} \mathcal{T}(s'|s,a)V(s') - \xi(s,a)\mathbb{I}\{\pi^*(a|s)=0\}.
\end{equation}
\end{lemma}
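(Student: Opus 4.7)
The plan is to prove both directions by exploiting the preceding lemma (Implicit Feasible Reward Set), using the soft/standard Bellman expansion of $Q^{\pi^*}$ to convert the implicit optimality inequalities into an explicit algebraic form. The key identity throughout will be $Q_{\mathcal{M}' \cup R}^{\pi^*}(s,a) = R(s,a) + \gamma \sum_{s'} \mathcal{T}(s'|s,a)\, V_{\mathcal{M}' \cup R}^{\pi^*}(s')$, which is just the policy evaluation Bellman equation.

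For the forward direction ($\Rightarrow$), I would assume $R \in \mathcal{R}_\mathfrak{B}$ and set the candidate objects as $V(s) := V_{\mathcal{M}' \cup R}^{\pi^*}(s)$ and
\begin{equation*}
\xi(s,a) := V(s) - Q_{\mathcal{M}' \cup R}^{\pi^*}(s,a).
\end{equation*}
By the preceding lemma, $\xi(s,a) = 0$ whenever $\pi^*(a|s) > 0$ and $\xi(s,a) \geq 0$ whenever $\pi^*(a|s) = 0$, so $\xi \in \mathbb{R}_{\geq 0}^{\mathcal{S} \times \mathcal{A}}$ and $\xi(s,a) = \xi(s,a)\mathbb{I}\{\pi^*(a|s) = 0\}$. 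Rearranging the Bellman equation for $Q^{\pi^*}$ immediately yields the stated explicit form for $R$.

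For the converse ($\Leftarrow$), given $V$ and $\xi \geq 0$ satisfying the explicit formula, I would verify the implicit optimality conditions of the previous lemma by computing the $Q$-values induced by $R$ against the value candidate $V$. Substituting the explicit form of $R$ into $Q(s,a) := R(s,a) + \gamma \sum_{s'} \mathcal{T}(s'|s,a)\, V(s')$ the transition terms telescope and one obtains
\begin{equation*}
Q(s,a) = V(s) - \xi(s,a)\mathbb{I}\{\pi^*(a|s)=0\}.
\end{equation*}
Then a one-line computation shows $\sum_a \pi^*(a|s)\, Q(s,a) = V(s)$, so $V$ is self-consistent as the value function of $\pi^*$ under $R$, i.e., $V = V_{\mathcal{M}' \cup R}^{\pi^*}$. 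With this identification the displayed expression for $Q$ yields $Q^{\pi^*}(s,a) - V^{\pi^*}(s) = 0$ when $\pi^*(a|s) > 0$ and $Q^{\pi^*}(s,a) - V^{\pi^*}(s) = -\xi(s,a) \leq 0$ when $\pi^*(a|s) = 0$, matching the hypotheses of the previous lemma and hence giving $R \in \mathcal{R}_\mathfrak{B}$.

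The only subtle step I anticipate is the self-consistency check in the converse direction: one must justify that the $V$ appearing syntactically in the reward formula is indeed the policy-evaluation value of $\pi^*$ under the reconstructed $R$, since otherwise the explicit formula would only be a tautology rather than a feasibility certificate. This follows cleanly because the indicator $\mathbb{I}\{\pi^*(a|s)=0\}$ kills the $\xi$ term on the support of $\pi^*(\cdot|s)$, so averaging $Q(s,a)$ against $\pi^*(a|s)$ recovers exactly $V(s)$. Once this is observed, the rest of the argument is routine Bellman bookkeeping, and no new machinery beyond the previous lemma is required.
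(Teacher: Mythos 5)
The paper does not actually prove this lemma; it is imported verbatim from \citet{metelli2021provably}, with only the one-line remark that it follows by ``combining'' the implicit feasibility conditions with the standard Bellman/value-iteration identity. Your argument is a correct, self-contained reconstruction of exactly that route: the forward direction reads off $V$ and $\xi$ from the policy-evaluation equation plus the sign conditions of the implicit lemma, and the converse verifies those sign conditions for the reconstructed $R$. The only step you should make fully explicit is the ``self-consistency'' claim in the converse: showing $\sum_a \pi^*(a|s)\,Q(s,a)=V(s)$ establishes that $V$ is \emph{a} fixed point of the policy-evaluation operator for $\pi^*$ under $R$, and you then need the uniqueness of that fixed point (which holds because the operator is a $\gamma$-contraction for $\gamma\in(0,1)$, with $\mathcal{S}$, $\mathcal{A}$ finite as assumed here) to conclude $V=V^{\pi^*}_{\mathcal{M}'\cup R}$ rather than merely some consistent candidate. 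With that one sentence added, the proof is complete and matches the intended derivation, so there is nothing substantively different to compare against the paper.
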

With ~\cref{equation::explicit_reward}, we can derive the following error bound between $R\in \mathcal{R}^E_{\mathfrak{B}}$ and $\hat{R}^E\in\mathcal{R}_{\hat{\mathfrak{B}}}$. 


\begin{theorem}[Reward Function Error Bound]
\label{proof:reward_error_bound}
    Let $\mathfrak{B}=(\mathcal{M}',\pi^*)$ and $\hat{\mathfrak{B}}=(\hat{\mathcal{M}'},\pi^*)$ be two IRL problems, then for any $R^E\in\mathcal{R}_{\mathfrak{B}}$ there is a corresponding $\hat{R}^E\in\mathcal{R}_{\hat{\mathfrak{B}}}$ such that
    \begin{equation}
        \Vert R^E-\hat{R}^E\Vert\leq\frac{\gamma}{1-\gamma}|\mathcal{S}|\epsilon_{\mathcal{T}}R_{\max}.
    \end{equation}
\end{theorem}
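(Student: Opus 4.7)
The plan is to use the explicit characterization of feasible rewards from \cref{lemma:feasi_reward} as a constructive recipe: starting from any $R^E\in\mathcal{R}_{\mathfrak{B}}$, I read off the associated potential $V$ and slack $\xi$ (the latter only nontrivial on actions $\pi^*$ never plays) that realize $R^E(s,a)=V(s)-\gamma\sum_{s'}\mathcal{T}(s'|s,a)V(s')-\xi(s,a)\mathbb{I}\{\pi^*(a|s)=0\}$, and then re-instantiate the same formula with $\hat{\mathcal{T}}$ in place of $\mathcal{T}$. Define
\[
\hat{R}^E(s,a)\;=\;V(s)-\gamma\sum_{s'}\hat{\mathcal{T}}(s'|s,a)V(s')-\xi(s,a)\mathbb{I}\{\pi^*(a|s)=0\}.
\]
By the same lemma applied to $\hat{\mathfrak{B}}$, this $\hat{R}^E$ lies in $\mathcal{R}_{\hat{\mathfrak{B}}}$ by construction, so the theorem reduces to a pointwise comparison of the two reward expressions.

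Subtracting the two formulas, the $V(s)$ and $\xi$ terms cancel and I am left with
\[
R^E(s,a)-\hat{R}^E(s,a)\;=\;\gamma\sum_{s'}\bigl(\hat{\mathcal{T}}(s'|s,a)-\mathcal{T}(s'|s,a)\bigr)V(s').
\]
Next I bound $V$ uniformly: since $R^E$ is feasible with $\|R^E\|_\infty\leq R_{\max}$ and $\pi^*$ is optimal in $\mathcal{M}'\cup R^E$, the potential $V$ coincides (up to the usual gauge) with the value function of $\pi^*$, so the geometric-series argument gives $\|V\|_\infty\leq R_{\max}/(1-\gamma)$.

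It remains to convert the transition-probability discrepancy into the stated $|\mathcal{S}|\epsilon_{\mathcal{T}}$ factor. Bounding the weighted sum by $\|V\|_\infty$ times $\sum_{s'}|\hat{\mathcal{T}}(s'|s,a)-\mathcal{T}(s'|s,a)|$ produces a factor of $2D_{TV}(\hat{\mathcal{T}}(\cdot|s,a),\mathcal{T}(\cdot|s,a))$; I would then invoke \cref{assump:transition_error} to upgrade this pointwise discrepancy from the expected bound it provides, paying a $|\mathcal{S}|$ factor when passing from the state-averaged bound to a worst-case bound across $\mathcal{S}$ (by a union-type argument over the support of $\pi_{D,t}$). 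Chaining the three ingredients — the formula for the difference, the $R_{\max}/(1-\gamma)$ bound on $V$, and the TV-to-$\epsilon_{\mathcal{T}}$ conversion — yields $|R^E(s,a)-\hat{R}^E(s,a)|\leq \gamma|\mathcal{S}|\epsilon_{\mathcal{T}}R_{\max}/(1-\gamma)$ uniformly in $(s,a)$, hence the stated bound on $\|R^E-\hat{R}^E\|_\infty$.

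The main obstacle will be reconciling \cref{assump:transition_error}, which is stated as an expected TV distance under some state-visitation distribution $\pi_{D,t}$, with the infinity norm on the left-hand side of the theorem. The $|\mathcal{S}|$ multiplier in the conclusion is the telltale sign that the argument converts an aggregate (expected) discrepancy into a worst-case per-state one; I would need to make this conversion explicit, either by assuming that $\pi_{D,t}$ charges every state with mass at least $1/|\mathcal{S}|$ or by absorbing the density ratio into the bound via a union-type estimate. Apart from this step, the argument is a clean difference-of-formulas calculation made possible by the explicit feasibility characterization of \cref{lemma:feasi_reward}.
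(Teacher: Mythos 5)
Your construction is exactly the paper's: both proofs instantiate the explicit feasible-reward formula of \cref{lemma:feasi_reward} with the same $V$ and $\xi$ under $\mathcal{T}$ and $\hat{\mathcal{T}}$ respectively, cancel the common terms to leave $\gamma\sum_{s'}(\hat{\mathcal{T}}-\mathcal{T})V(s')$, and bound $\|V\|_\infty\le R_{\max}/(1-\gamma)$. The one place you diverge is the bookkeeping of the $|\mathcal{S}|$ factor. In the paper it comes from the sum over successor states $s'$: each per-state probability gap is bounded by $\epsilon_{\mathcal{T}}$ and $|\mathcal{S}|$ such terms are summed, with $\epsilon_{\mathcal{T}}$ treated throughout as a uniform per-$(s,a)$ bound --- the averaged-to-pointwise conversion you flag as the main obstacle is never performed. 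Your alternative (collapse the sum over $s'$ into $2D_{TV}$, then spend $|\mathcal{S}|$ upgrading \cref{assump:transition_error} from an expectation under $\pi_{D,t}$ to a worst case) would indeed require the extra hypothesis you name, such as $\pi_{D,t}$ placing mass at least $1/|\mathcal{S}|$ on every state, and as written it also leaves a stray factor of $2$ from $\sum_{s'}|\hat{\mathcal{T}}-\mathcal{T}|=2D_{TV}$ that your final chained inequality silently drops. So your instinct that something must reconcile the expectation in the assumption with the infinity norm in the conclusion is sound --- the paper simply elides this --- but the $|\mathcal{S}|$ in the stated bound is the cardinality of the sum over $s'$, not the price of that conversion.
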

\begin{proof}
From ~\cref{lemma:feasi_reward}, we can derive the following representations of $R$ and $\hat{R}$ with the same set of $V$ and $\xi$:
\begin{align*}
    R^E(s,a) &= V(s) - \gamma \sum_{s' \in \mathcal{S}} \mathcal{T}(s'|s,a)V(s') - \xi(s,a)\mathbb{I}\{\pi^*(a|s) = 0\}, \\
    \hat{R}^E(s,a) &= V(s) - \gamma \sum_{s' \in \mathcal{S}} \hat{\mathcal{T}}(s'|s,a)V(s') - \xi(s,a)\mathbb{I}\{\pi^*(a|s) = 0\}.
\end{align*}

The difference between $R^E$ and $\hat{R}^E$ can be bounded as follows:
\begin{align*}
    \Vert R^E - \hat{R}^E \Vert &\leq \gamma \sum_{s' \in \mathcal{S}} D_{TV}\left(\mathcal{T}(s'|s,a)\vert\hat{\mathcal{T}}(s'|s,a)\right) \cdot \Vert V(s')\Vert.
\end{align*}

Given that the total variation distance between the two dynamics is bounded by $\epsilon_{\mathcal{T}}$, and the reward function is bounded by $R_{\max}$, together with the definition of the value function, we have $\Vert V \Vert_{\infty} \leq \frac{R_{\max}}{1 - \gamma}$. Substituting these bounds, we derive the following inequality:
\begin{equation*}
    \Vert R^E - \hat{R}^E\Vert \leq \frac{\gamma}{1 - \gamma} |\mathcal{S}| \epsilon_{\mathcal{T}} R_{\max}.
\end{equation*}
\end{proof}

Next, we will propagate this bound to the value functions of optimal policy regarding different reward functions $R^E$ and $\hat{R}^E$. From the traditional Value iteration, we can write out the value function. 
\begin{equation}
    V_{\mathcal{M}'\cup R^E}^\pi(s) =\sum_{a\in\mathcal{A}}\pi(a|s)\sum_{s'\in\mathcal{S}}\mathcal{T}(s'|s,a)\left[R^E(s,a)+\gamma V_{\mathcal{M}'\cup R^E}^\pi(s')\right].
\end{equation}

\begin{lemma}[Value Function Error under same policy and different rewards and MDP]
    $||V_{\mathcal{M}'\cup R}^\pi(s) - V_{\hat{\mathcal{M}}'\cup \hat{R}}^\pi(s)||$: the performance difference of the same policy in different MDPs.
    \begin{equation}
        ||V_{\mathcal{M}'\cup R^E}^\pi(s) - V_{\hat{\mathcal{M}}'\cup \hat{R}^E}^\pi(s)||\leq
\epsilon_\mathcal{T} \frac{1+\gamma}{(1-\gamma)^2}R_{\max} |\mathcal{S}|.
    \end{equation}
\end{lemma}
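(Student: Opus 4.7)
The plan is to use a simulation-lemma-style argument: expand both value functions via the shared-policy Bellman equation, isolate the reward-error and transition-error contributions through an add-subtract trick, and then exploit the usual $\gamma$-contraction to close the recursion.

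First, I would write out the Bellman equations for the two value functions under the common policy $\pi$:
\begin{align*}
V_{\mathcal{M}'\cup R}^\pi(s) &= \sum_a \pi(a|s)\Bigl[R(s,a) + \gamma \sum_{s'} \mathcal{T}(s'|s,a)\, V_{\mathcal{M}'\cup R}^\pi(s')\Bigr],\\
V_{\hat{\mathcal{M}}'\cup \hat{R}}^\pi(s) &= \sum_a \pi(a|s)\Bigl[\hat{R}(s,a) + \gamma \sum_{s'} \hat{\mathcal{T}}(s'|s,a)\, V_{\hat{\mathcal{M}}'\cup \hat{R}}^\pi(s')\Bigr].
\end{align*}
Subtracting and inserting the intermediate quantity $\hat{\mathcal{T}}(s'|s,a)\,V_{\mathcal{M}'\cup R}^\pi(s')$ decomposes the difference into three pieces: a reward-difference term $R-\hat R$, a transition-difference term $(\mathcal{T}-\hat{\mathcal{T}})$ weighted by $V_{\mathcal{M}'\cup R}^\pi$, and a residual Bellman-propagated term $\gamma\,\hat{\mathcal{T}}\bigl(V_{\mathcal{M}'\cup R}^\pi - V_{\hat{\mathcal{M}}'\cup \hat{R}}^\pi\bigr)$.

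Next, I would take the infinity norm of both sides. The reward term is bounded by $\Vert R-\hat{R}\Vert$, the transition term is controlled through Assumption \ref{assump:transition_error} in the same convention used in Theorem \ref{thm:reward_func_Error_Bound} so that $\sum_{s'} D_{TV}(\mathcal{T}(s'|s,a)\vert\hat{\mathcal{T}}(s'|s,a)) \leq |\mathcal{S}|\,\epsilon_{\mathcal{T}}$, and the residual term is contracted by $\gamma$. This yields the one-line inequality
\[
\Vert V_{\mathcal{M}'\cup R}^\pi - V_{\hat{\mathcal{M}}'\cup \hat{R}}^\pi\Vert
\;\leq\; \Vert R-\hat{R}\Vert \;+\; \gamma\,|\mathcal{S}|\,\epsilon_{\mathcal{T}}\,\Vert V_{\mathcal{M}'\cup R}^\pi\Vert \;+\; \gamma\,\Vert V_{\mathcal{M}'\cup R}^\pi - V_{\hat{\mathcal{M}}'\cup \hat{R}}^\pi\Vert,
\]
which I would rearrange to collect the self-referential term and divide by $(1-\gamma)$.

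Finally, I would plug in the two ingredients already available: the reward error bound from Theorem \ref{thm:reward_func_Error_Bound}, namely $\Vert R-\hat R\Vert\leq \frac{\gamma}{1-\gamma}|\mathcal{S}|\epsilon_{\mathcal{T}}R_{\max}$, and the standard value bound $\Vert V_{\mathcal{M}'\cup R}^\pi\Vert\leq \frac{R_{\max}}{1-\gamma}$. Combining these inside the collected inequality produces a bound whose numerator contains $R_{\max}|\mathcal{S}|\epsilon_{\mathcal{T}}$ multiplied by a sum of $\gamma$-powers, giving the stated $\frac{1+\gamma}{(1-\gamma)^2}$ factor. The main obstacle I expect is bookkeeping: the paper's per-state convention for $D_{TV}$ (as opposed to the standard $\ell_1$ or probability-measure convention) has to be threaded consistently through both the transition term and the reward term so that the $|\mathcal{S}|$ factor appears exactly once and the $\gamma$-powers line up to produce $(1+\gamma)$ rather than $2\gamma$ or $2$; choosing which of the two value functions (either $V_{\mathcal{M}'\cup R}^\pi$ or $V_{\hat{\mathcal{M}}'\cup \hat{R}}^\pi$) to retain in the transition term after the add-subtract step is the key choice that controls this constant.
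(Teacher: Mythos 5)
Your proposal is correct and follows essentially the same route as the paper's proof: expand both Bellman equations under the shared policy $\pi$, add and subtract an intermediate transition term, bound the transition discrepancy via \cref{assump:transition_error} (with the paper's per-next-state $D_{TV}$ convention supplying the $|\mathcal{S}|$ factor), invoke \cref{thm:reward_func_Error_Bound} for $\Vert R-\hat{R}\Vert$ together with $\Vert V\Vert\leq R_{\max}/(1-\gamma)$, and close the recursion by the $\gamma$-contraction. The only divergence is bookkeeping, exactly as you anticipated: the paper inserts $\hat{\mathcal{T}}\bigl[R+\gamma V\bigr]$ so the transition-error term carries $\frac{R_{\max}}{1-\gamma}$ and the numerator comes out as $(1+\gamma)$, whereas your split (inserting $\hat{\mathcal{T}}V$ after peeling off the reward difference) gives $\frac{\gamma}{1-\gamma}+\frac{\gamma}{1-\gamma}$ before the final division by $(1-\gamma)$, i.e.\ a $2\gamma$ numerator --- which is tighter and still implies the stated bound since $2\gamma\leq 1+\gamma$.
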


\begin{proof}

\begin{equation}
\begin{aligned}
&||V_{\mathcal{M}'\cup R^E}^\pi(s) - V_{\hat{\mathcal{M}}'\cup \hat{R}^E}^\pi(s)||\\
&\leq\sum_{a\in\mathcal{A}}\pi(a|s)\sum_{s'\in\mathcal{S}}||\mathcal{T}(s'|s,a)\left[R^E(s,a)+\gamma V_{\mathcal{M}'\cup R^E}^\pi(s')\right]-\hat{\mathcal{T}}(s'|s,a)\left[R^E(s,a)+\gamma V_{\mathcal{M}'\cup R^E}^\pi(s')\right]\\
&+\hat{\mathcal{T}}(s'|s,a)\left[R^E(s,a)+\gamma V_{\mathcal{M}'\cup R^E}^\pi(s')\right]-\hat{\mathcal{T}}(s'|s,a)\left[\hat{R}^E(s,a)+\gamma  V_{\hat{\mathcal{M}}'\cup \hat{R}^E}^\pi(s')\right]||\\
&\leq \sum_{a\in\mathcal{A}}\pi(a|s)\sum_{s'\in\mathcal{S}} (\epsilon_\mathcal{T} \frac{R_{\max}}{1-\gamma}+\hat{\mathcal{T}}(s'|s,a)\epsilon_{\mathcal{T}} (\frac{\gamma R_{\max}}{1-\gamma}|\mathcal{S}| + \gamma ||V_{\mathcal{M}'\cup R^E}^\pi(s') - V_{\hat{\mathcal{M}}'\cup \hat{R}^E}^\pi(s')||))
\\
&=\sum_{a\in\mathcal{A}}\pi(a|s)(\epsilon_\mathcal{T} \frac{R_{\max}}{1-\gamma}|\mathcal{S}|+\epsilon_{\mathcal{T}} \frac{\gamma R_{\max}}{1-\gamma}|\mathcal{S}|+ \gamma ||V_{\mathcal{M}'\cup R^E}^\pi(s') - V_{\hat{\mathcal{M}}'\cup \hat{R}^E}^\pi(s')||)\\
&\leq\epsilon_\mathcal{T} \frac{1+\gamma}{1-\gamma}R_{\max} |\mathcal{S}|
+ \gamma ||V_{\mathcal{M}'\cup R^E}^\pi(s') - V_{\hat{\mathcal{M}}'\cup \hat{R}^E}^\pi(s')||\\
&\leq\epsilon_\mathcal{T} \frac{1+\gamma}{(1-\gamma)^2}R_{\max} |\mathcal{S}|.
\end{aligned}
\end{equation}

\end{proof}

\begin{lemma}
\label{lemma:different_policy}
    Let $\Vert V_{\hat{\mathcal{M}'}\cup\hat{R}^E}^{\pi_{1}}(s) - V_{\hat{\mathcal{M}'}\cup\hat{R}^E}^{\pi_{2}}(s)\Vert$ denote the performance difference between different policies $\pi_1$ and $\pi_2$ in the same learned MDP \citep{viano2021robust, zhang2020learning}. The following inequality holds:
    \begin{equation*}
        \Vert V_{\hat{\mathcal{M}'}\cup\hat{R}^E}^{\pi_{1}}(s) - V_{\hat{\mathcal{M}'}\cup\hat{R}^E}^{\pi_{2}}(s)\Vert \leq \frac{\gamma}{(1 - \gamma)^2} \epsilon_{\mathcal{T}} R_{\max}.
    \end{equation*}
\end{lemma}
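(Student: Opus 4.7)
The plan is to interpret the two policies as the optimal policies in their respective MDPs, since this is how the lemma is invoked inside the proof of \cref{thm:value_error_bound}. Concretely, I would set $\pi_1 = \pi^*$, the optimal policy of $\mathcal{M}'\cup R$, and $\pi_2 = \hat{\pi}^*$, the optimal policy of $\hat{\mathcal{M}'}\cup\hat{R}$, so that a positive duality gap is available on both sides: $V^{\hat{\pi}^*}_{\hat{\mathcal{M}'}\cup\hat{R}} - V^{\pi^*}_{\hat{\mathcal{M}'}\cup\hat{R}} \geq 0$ by optimality of $\hat{\pi}^*$ in $\hat{\mathcal{M}'}\cup\hat{R}$, and $V^{\pi^*}_{\mathcal{M}'\cup R} - V^{\hat{\pi}^*}_{\mathcal{M}'\cup R} \geq 0$ by optimality of $\pi^*$ in $\mathcal{M}'\cup R$.

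Next, I would add these two non-negative quantities and regroup the four resulting terms to eliminate the policy mismatch, leaving only ``same-policy / different-MDP'' differences:
\[
V^{\hat{\pi}^*}_{\hat{\mathcal{M}'}\cup\hat{R}} - V^{\pi^*}_{\hat{\mathcal{M}'}\cup\hat{R}} \;\leq\; \bigl(V^{\hat{\pi}^*}_{\hat{\mathcal{M}'}\cup\hat{R}} - V^{\hat{\pi}^*}_{\mathcal{M}'\cup R}\bigr) + \bigl(V^{\pi^*}_{\mathcal{M}'\cup R} - V^{\pi^*}_{\hat{\mathcal{M}'}\cup\hat{R}}\bigr).
\]
Each summand is a fixed-policy value gap across the two MDPs, for which a simulation-lemma-style Bellman recursion applies. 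Subtracting the Bellman equations for a fixed $\pi$ under $(\mathcal{M}', R)$ and $(\hat{\mathcal{M}'}, \hat{R})$ gives a $\gamma$-contraction whose one-step residual is driven by $D_{TV}(\mathcal{T}, \hat{\mathcal{T}})$ multiplied by $\Vert V \Vert_\infty \leq R_{\max}/(1-\gamma)$; iterating the recursion under \cref{assump:transition_error} yields the familiar bound on the order of $\gamma R_{\max}\epsilon_{\mathcal{T}}/(1-\gamma)^2$ for each term.

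The hard part, which I expect to be the main obstacle, is matching the stated constant $\gamma R_{\max}\epsilon_{\mathcal{T}}/(1-\gamma)^2$ without either a factor of $2$ (from summing the two simulation-lemma bounds) or an $|\mathcal{S}|$ factor inherited from \cref{thm:reward_func_Error_Bound} (since the two MDPs differ in reward as well as in dynamics). To avoid the $|\mathcal{S}|$ factor I would route the decomposition through an intermediate MDP that keeps one of the two sources of mismatch fixed, and exploit the reward-shaping structure of \cref{thm::soft_optimal_policy}: because $\hat{R}$ is produced by the model-enhanced shaping, the soft advantage function—and thus the greedy action—of $\pi^*$ in the two MDPs differ only through $\mathbb{E}_{\mathcal{T}}[\phi(s_{t+1})\mid s_t,a_t] - \mathbb{E}_{\hat{\mathcal{T}}}[\phi(s_{t+1})\mid s_t,a_t]$, which contributes only the dynamics-mismatch term and not a full reward-mismatch term. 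Closing the argument then amounts to absorbing the reward-perturbation contribution into the simulation bound, invoking the tighter variant used in \citet{viano2021robust} and \citet{zhang2020learning} to collapse the two summands into a single $\gamma R_{\max}\epsilon_{\mathcal{T}}/(1-\gamma)^2$ dependence.
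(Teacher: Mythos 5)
The paper offers no proof of this lemma at all---it is stated bare with two citations---so there is nothing of the paper's to compare your derivation against; what follows assesses the proposal on its own terms. Your reading of the statement is the right one: for arbitrary $\pi_1,\pi_2$ in a fixed MDP the claim is simply false, so the lemma only makes sense with $\pi_1=\pi^*$ and $\pi_2=\hat{\pi}^*$ as it is invoked in \cref{thm:value_error_bound}, and your optimality-gap decomposition
$V^{\hat{\pi}^*}_{\hat{\mathcal{M}}'\cup\hat{R}}-V^{\pi^*}_{\hat{\mathcal{M}}'\cup\hat{R}}\le\bigl(V^{\hat{\pi}^*}_{\hat{\mathcal{M}}'\cup\hat{R}}-V^{\hat{\pi}^*}_{\mathcal{M}'\cup R}\bigr)+\bigl(V^{\pi^*}_{\mathcal{M}'\cup R}-V^{\pi^*}_{\hat{\mathcal{M}}'\cup\hat{R}}\bigr)$
is algebraically correct and is the standard first move. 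But the proof does not close. Each summand is a fixed-policy gap across two MDPs that differ in \emph{both} dynamics and reward, so the paper's own preceding lemma prices each at $\epsilon_{\mathcal{T}}\frac{1+\gamma}{(1-\gamma)^2}R_{\max}|\mathcal{S}|$; summing two of these cannot yield $\frac{\gamma}{(1-\gamma)^2}\epsilon_{\mathcal{T}}R_{\max}$ with neither a factor of $2$ nor an $|\mathcal{S}|$. You identify this obstacle yourself, but the proposed repairs (an unspecified intermediate MDP, a shaping-structure argument, ``the tighter variant used in'' the cited works) are promissory notes rather than steps. That is a genuine gap: as written, the argument proves a weaker bound than the one claimed.

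The observation you are missing is that the pairing of $R^E$ and $\hat{R}^E$ in \cref{thm:reward_func_Error_Bound} already places $\hat{R}^E$ in $\mathcal{R}_{\hat{\mathfrak{B}}}$: by \cref{lemma:feasi_reward} both rewards are constructed from the same $V$ and $\xi$, which means the expert policy $\pi^*$ is, by construction, optimal in $\hat{\mathcal{M}}'\cup\hat{R}$ as well. Under that instantiation $V^{\pi^*}_{\hat{\mathcal{M}}'\cup\hat{R}}$ and $V^{\hat{\pi}^*}_{\hat{\mathcal{M}}'\cup\hat{R}}$ are both the optimal value function of the same MDP, the left-hand side of the lemma is identically zero, and the stated bound holds vacuously---no simulation lemma, factor of two, or $|\mathcal{S}|$ bookkeeping required. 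A nontrivial constant of the form $\frac{\gamma}{(1-\gamma)^2}\epsilon_{\mathcal{T}}R_{\max}$ only becomes necessary in a robust version where $\hat{R}$ is merely approximately feasible, and in that setting the constant would have to be re-derived (this is presumably what the citations to viano2021robust and zhang2020learning are meant to cover); your proposal establishes neither the trivial version nor the robust one.
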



\begin{theorem}[Performance Difference Bound]
\label{proof:value_error_bound}
    The performance difference between the optimal policies ($\pi^*$ and $\hat{\pi}^*$) in corresponding MDPs ($\mathcal{M}'\cup R^E$ and $\hat{\mathcal{M}}'\cup \hat{R}^E$) can be bounded as follows:
    \begin{equation}
        \Vert V_{\mathcal{M}'\cup R^E}^{\pi^*} - V_{\hat{\mathcal{M}}'\cup \hat{R}^E}^{\hat{\pi}^*} \Vert\leq
\epsilon_\mathcal{T}\left[ \frac{\gamma}{(1-\gamma)^2} R_{\max}+\frac{1+\gamma}{(1-\gamma)^2}R_{\max} |\mathcal{S}|\right].
    \end{equation}
\end{theorem}
\begin{proof}

\begin{align*}
    &||V_{\mathcal{M}'\cup R^E}^{\pi^*}(s) - V_{\hat{\mathcal{M}}'\cup \hat{R}^E}^{\hat{\pi}^*}(s)||\\
    &\leq||V_{\hat{\mathcal{M}}'\cup \hat{R}^E}^{\pi^*}(s) - V_{\hat{\mathcal{M}'}\cup \hat{R}^E}^{\hat{\pi}^*}(s)|| + ||V_{\mathcal{M}'\cup R^E}^{\hat{\pi}^*}(s) - V_{\hat{\mathcal{M}}'\cup \hat{R}^E}^{\hat{\pi}^*}(s)|| \\
    &=\epsilon_\mathcal{T} \frac{\gamma}{(1-\gamma)^2} R_{\max}+ \epsilon_\mathcal{T} \frac{1+\gamma}{(1-\gamma)^2}R_{\max} |\mathcal{S}|\\
    &=\epsilon_\mathcal{T}\left[\frac{\gamma}{(1-\gamma)^2}R_{\max}+\frac{1+\gamma}{(1-\gamma)^2}R_{\max}\vert\mathcal{S}\vert\right].
\end{align*}
\end{proof}

\newpage
\section{Ablation Studies}
\label{section:ablation_study}
\subsection{Robustness to stochasticity}
\label{section:ablation_stochastic}
In this study, we examine the robustness of our method across varying levels of stochasticity in the environment. Following the same setup as in our main experiments, we introduce an unknown Gaussian noise with different standard deviations in \texttt{InvertedPendulum-v4} to simulate increased stochasticity. As shown in \tableref{abalation_stochastic_table} and \cref{Fig:abalation_stoch}, our method consistently recovers expert-level performance despite the presence of stochastic disturbances. However, as the level of stochasticity increases, we observe that training stability decreases, as reflected in the increased variance in \cref{Fig:abalation_stoch}.

\begin{figure}[h]
    \centering
    \begin{minipage}{0.5\textwidth}
        \centering
        \includegraphics[width=0.95\linewidth]{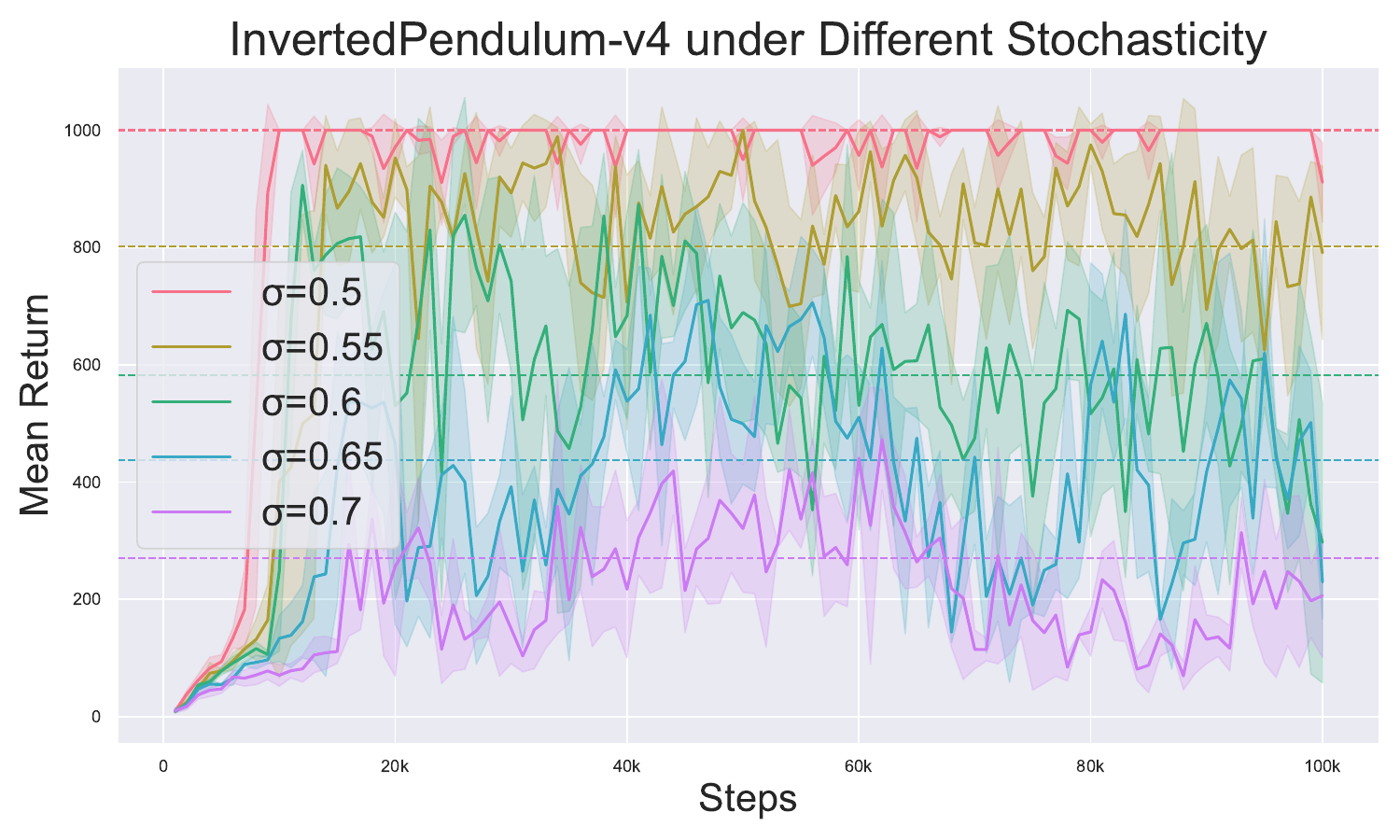}
        \captionof{figure}{Training return diagram averaging across three seeds for different numbers of expert trajectories in \texttt{InvertedPendulum-v4}.}
        \label{Fig:abalation_stoch}
    \end{minipage}
    \hfill
    \begin{minipage}{0.45\textwidth}
    
        \centering
        \scalebox{0.85}{
        \begin{tabular}{c|cc}
            \hline
            Std & Expert & Ours \\
            \hline
            0.5 & $1000.0_{\pm 0.0}$ & $1000.0_{\pm 0.0}$ \\
            0.55 & $802.4_{\pm 305.8}$ & $1000.0_{\pm 0.0}$ \\
            0.6 & $582.1_{\pm 360.5}$ & $906.3_{\pm 59.1 }$ \\
            0.65 & $438.2_{\pm 322.3}$ & $709.9_{\pm 80.6 }$ \\
            0.7 & $270.7_{\pm 236.0}$ & $472.7_{\pm 85.7}$ \\
            \hline
        \end{tabular}}
        \captionof{table}{Best performance of expert and our method in \texttt{InvertedPendulum-v4} environments with different Gaussian noises (standard deviations ranging from $0.5$ to $0.7$) for stochasticity under provided 100 expert trajectories.}\label{abalation_stochastic_table}
    \end{minipage}
\end{figure}

\subsection{Model estimation error and reward learning}
\label{section:ablation_model}

In this study, we empirically evaluate the effect of dynamic model learning errors on our method's performance, extending the theoretical analysis presented in \cref{section::performance_gap}. To isolate the impact of model errors specifically on reward learning, we use SAC on real trajectories for policy optimization, thereby removing any influence of model errors on trajectory generation that would typically affect model-based policy optimization. To quantify the relationship between model errors and performance, we standardize the ensemble model architecture as 2-layer MLPs with varying hidden layer dimensions from $8$ to $256$ to adjust model capacity. Our experiments are conducted in \texttt{HalfCheetah-v4} with random, policy-unknown Gaussian noise (mean $0$ and standard deviation $0.5$), as described in \cref{section::experiment}. 
From \cref{Fig:abalation_model} and \cref{Fig:abalation_model_error}, we observe the general trend that as modeling error decrease together with increasing capacity of the model structure, performances also increases, which is obvious when hidden dimension bumps up from $8$ to $16$ and $16$ to $32$. As transition model error narrows down, the performance improvement also becomes less obvious.    

\begin{figure}[h]
    \centering
    \begin{minipage}{0.45\textwidth}
        \centering
        \includegraphics[width=0.95\linewidth]{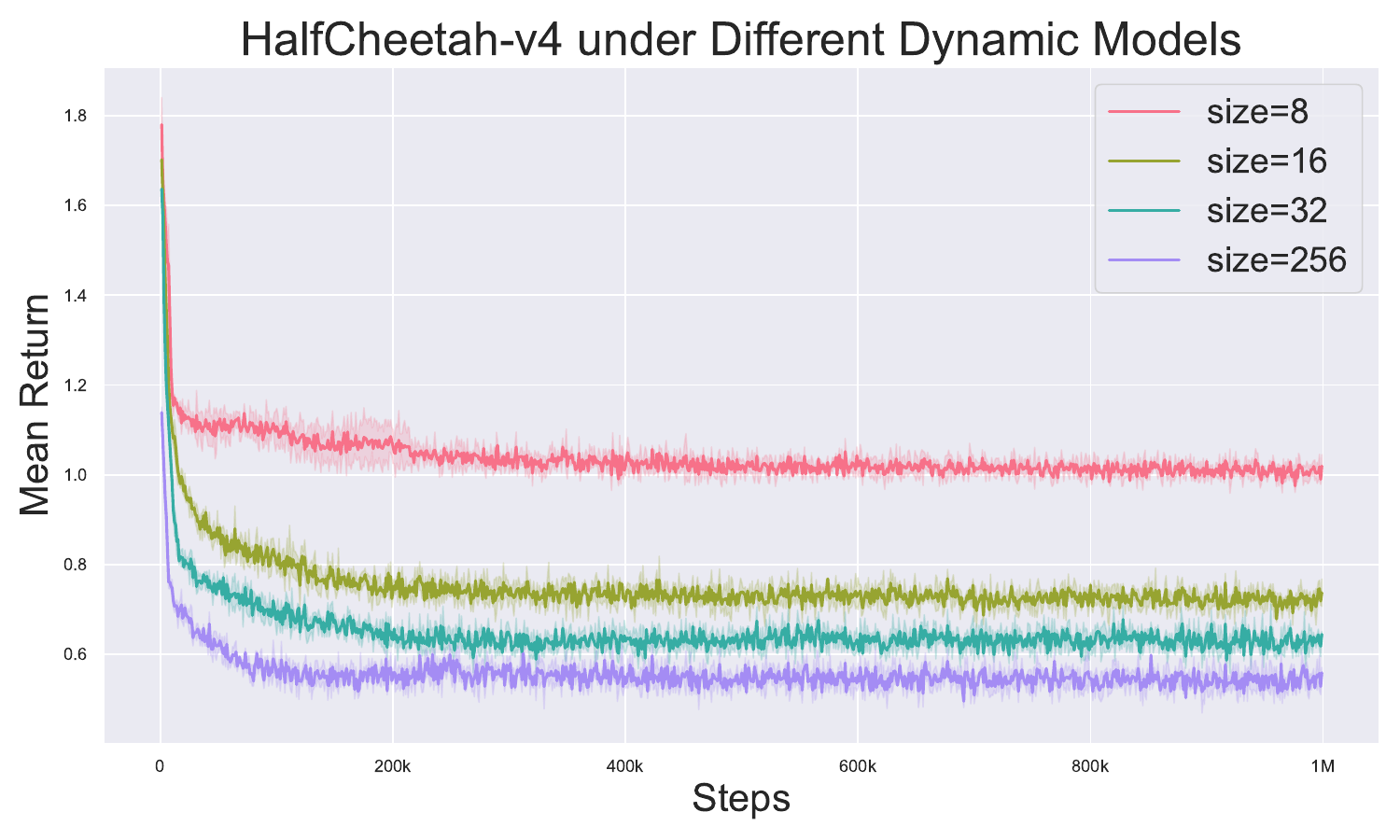}
        \captionof{figure}{Transition model learning error diagram averaging across three seeds for 10 expert trajectories in \texttt{HalfCheetah-v4}.}
        \label{Fig:abalation_model_error}
    \end{minipage}
    \hfill
    \begin{minipage}{0.5\textwidth}
        \centering
        \includegraphics[width=0.95\linewidth]{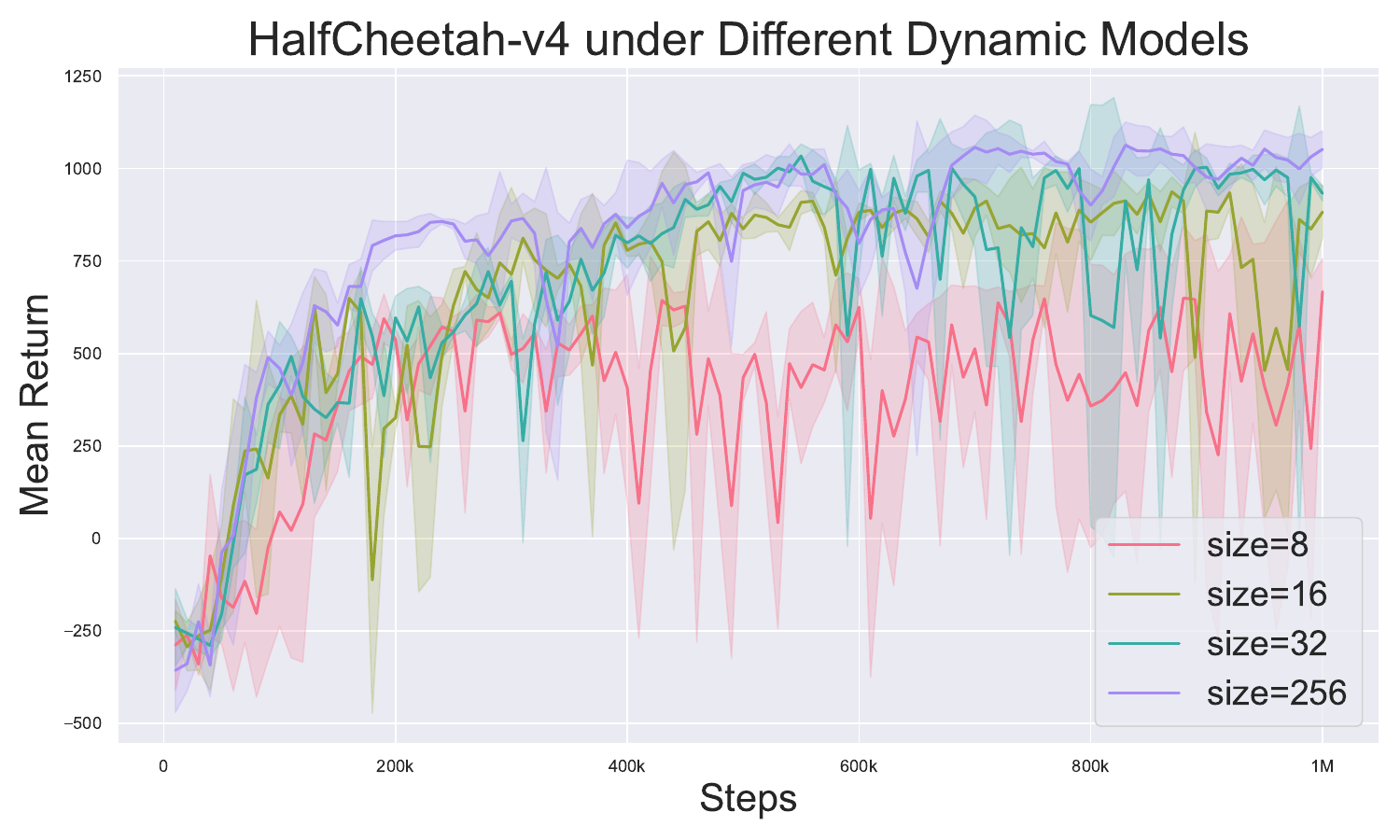}
        \captionof{figure}{Training return diagram averaging across three seeds for 10 expert trajectories in \texttt{HalfCheetah-v4}.}
        \label{Fig:abalation_model}
    \end{minipage}
\end{figure}

\subsection{Does model-based trajectories generation help?}
\label{section:ablation_sample_efficiency}
In this study, we empirically investigate the effectiveness of model-based policy optimization on our transition-aware reward shaping IRL framework. We compare three off-policy approaches namely DAC \citep{kostrikov2018discriminator}, transition-aware reward shaping with pure SAC for policy optimization (labeled as $mbirl\_sac$), and our original transition-aware reward shaping with model-based technique for policy optimization. Noted that synthetic data is also not used in reward learning in $mbirl\_sac$ approach. We conduct the experiment in stochastic \texttt{Hopper-v4} with 1000 expert trajectories. From \cref{Fig:abalation_policy} and \tableref{abalation_policy_table}, we can tell that both methods using transition-aware reward shaping have much better performance and sample efficiency compared to DAC. In terms of performance, both methods perform at a similar level. However, as the synthetic trajectories generation boosts the training process, our model-based method has better sample efficiency than the pure SAC-based method. 
\begin{figure}[h]
    \centering
    \begin{minipage}{0.5\textwidth}
        \centering
        \includegraphics[width=0.95\linewidth]{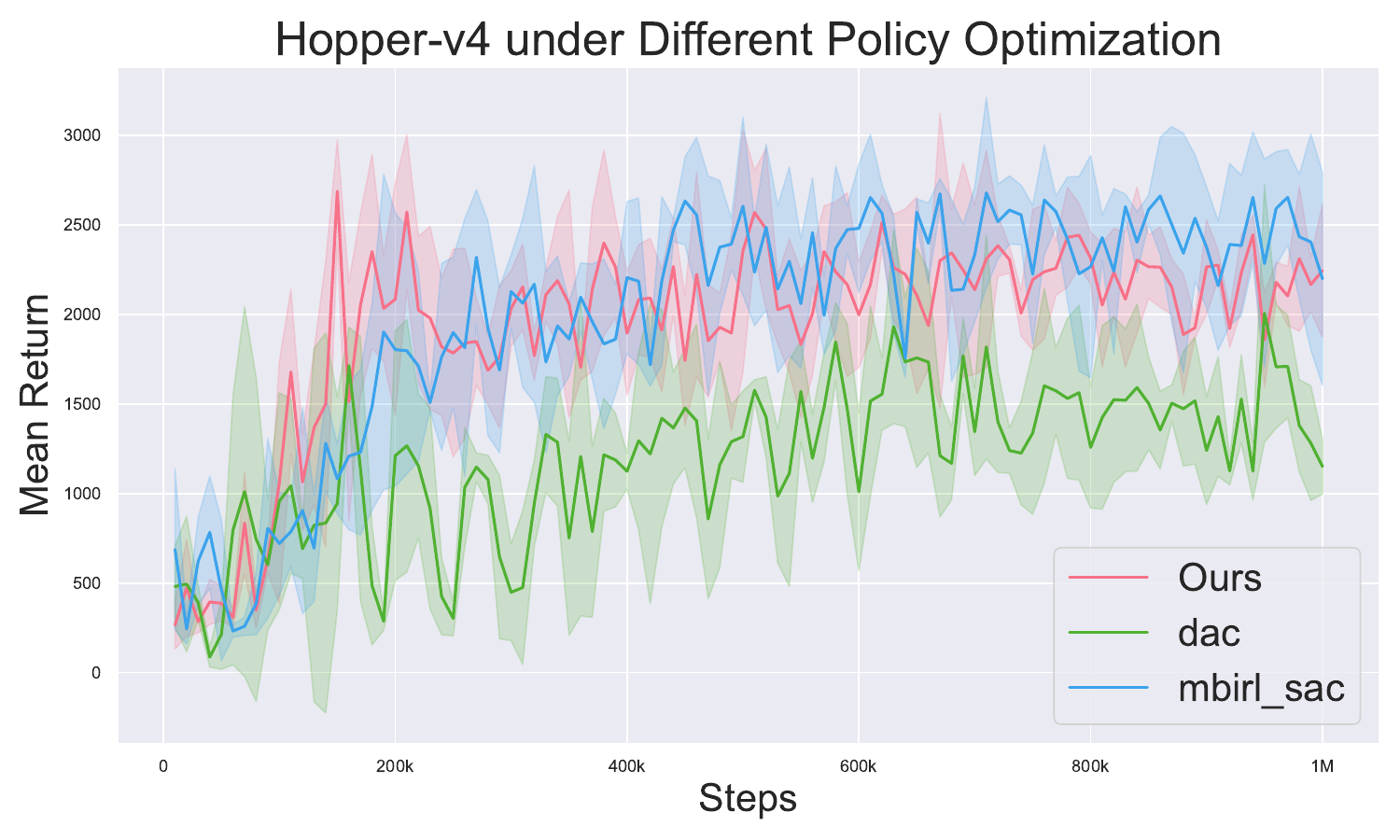}
        \captionof{figure}{Performance diagram averaging across three seeds for different algorithms in \texttt{Hopper-v4} with 1000 expert trajectories provided. DAC is in green color; $mbirl\_sac$ is in blue; Our method is in red.}
        \label{Fig:abalation_policy}
    \end{minipage}
    \hfill
    \begin{minipage}{0.45\textwidth}
        \centering
        \scalebox{0.95}{
        \begin{tabular}{c|c}
            \hline
            Method & Performance \\
            \hline   
            DAC & $2007.1_{\pm 719.7 }$ \\
            $mbirl\_sac$ & $2694.5_{\pm 77.5}$ \\
            Ours & $2798.8_{\pm 82.9 }$ \\
            \hline
        \end{tabular}}

        \captionof{table}{Best performance of three methods in stochastic \texttt{Hopper-v4} environment with under provided 1000 expert trajectories.}
            \label{abalation_policy_table}

    \end{minipage}
\end{figure}

\newpage
\section{Algorithmic Framework}

\begin{algorithm*}[h]
\caption{Model-Enhanced IRL (ME-IRL)}
\label{alg:meairl}
\begin{algorithmic}[1]
    \STATE \textbf{Input:} Expert buffer $\mathcal{D}_{exp}$ (collected expert trajectories), total training steps $N$, starting step for training policy $\text{STARTING\_STEP}$, horizon length $H$ for synthetic trajectory generation.
    \STATE \textbf{Output:} Learned policy $\pi$ optimized to mimic expert behavior.
    \STATE Initialize policy $\pi$ (random or pre-trained), discriminator $D_{\theta}$, buffers $\mathcal{D}_{env}$ (real environment samples), $\mathcal{D}_{gen}$ (synthetic samples), and transition model $\hat{\mathcal{T}}$ (random initialization). 
    
    \FOR{step $t$ in \{1, \dots, N\}}
        \STATE Interact with the environment using policy $\pi$ to collect new state-action pairs.
        \STATE Add the collected state-action pairs $(s_t, a_t, s_{t+1})$ to the real environment buffer $\mathcal{D}_{env}$.
        
        \IF{$t < \text{STARTING\_STEP}$}
            \STATE \textbf{Pre-train transition model:} Train $\hat{\mathcal{T}}$ on batches of samples from $\mathcal{D}_{env}$ using maximum likelihood estimation (MLE) loss to stabilize the transition model early.
        \ELSE
             \STATE \textbf{Step 1: Update Transition Model and Generate Synthetic Data}
            \STATE Update the transition model $\hat{\mathcal{T}}$ using maximum likelihood loss (MLE) on $\mathcal{D}_{env}$.
            \STATE Use $\hat{\mathcal{T}}$ to generate $H$-step synthetic trajectories and store them in the synthetic buffer $\mathcal{D}_{gen}$.
            \STATE \textbf{Step 2: Update Discriminator}
            \STATE Sample mini-batches of state-action pairs from $\mathcal{D}_{exp}$ (expert buffer) and $\mathcal{D}_{env}$ (environment buffer) with varying ratio.
            \STATE Train the discriminator $D_{\theta}$ using cross-entropy loss to classify expert data (from $\mathcal{D}_{exp}$) versus policy-generated data (from $\mathcal{D}_{env}$) as described in ~\cref{equation::disc_loss}.
            \STATE \textbf{Step 3: Policy Optimization with Mixed Data}
            \STATE Sample state-action batches from $\mathcal{D}_{env}$ and $\mathcal{D}_{gen}$ with a varying ratio that increases the use of $\mathcal{D}_{gen}$ as $\hat{\mathcal{T}}$ becomes more accurate.
            \STATE Compute the reward $\hat{R}_{\theta}$ for the sampled state-action pairs using the discriminator.
            \STATE Update the policy $\pi$ using Soft Actor-Critic (SAC) with the computed reward $\hat{R}_{\theta}$ as the optimization objective.
        \ENDIF
    \ENDFOR
    \STATE \textbf{Return:} Optimized policy $\pi$.
\end{algorithmic}
\end{algorithm*}

\newpage  
\section{Implementation Details}
\label{sec::implementation}
For our framework, we use two identical 2-layer Multi-Layer Perceptrons (MLPs) with 100 hidden units and ReLU activations for both the reward function \( R \) and the shaping potential function \( \phi \). To initialize the replay buffer for both \textbf{DAC} and ours, we collect 1,000 steps samples in \texttt{InvertedPendulum-v4} and \texttt{InvertedDoublePendulum-v4}, and 10,000 steps samples in \texttt{Hopper-v4}, \texttt{HalfCheetah-v4}, and \texttt{Walker2d-v4} with initial policy. During this pre-training phase, we also update the transition model at each step to mitigate divergence might happen at the beginning of the training. Additionally, the transition model is only trained using samples from real environment buffer $\mathcal{D}_{env}$ in policy optimization section before actor and critics updates during training phase. As discussed in \cref{section::algorithm}, the size of the synthetic data buffer \( \mathcal{D}_{\text{gen}} \) and the ratio of samples drawn from it increase as the model accuracy improves. Both parameters increase linearly with training steps, up to a maximum synthetic-to-real data ratio of 0.5 per training step and a maximum buffer size of \( 1 \) million samples in \( \mathcal{D}_{\text{gen}} \).
For consistency in comparisons, we used similar network structures and hyper-parameters for \textbf{AIRL, GAIL, and DAC} baselines, which we reference the implementations from \cite{arulkumaran2024pragmatic} and \cite{gleave2022imitation}. Detailed hyper-parameters for these networks are provided in the table below. For on-policy baselines \textbf{AIRL} and \textbf{GAIL}, the rollout length is set to 1,000 for \texttt{InvertedPendulum-v4} and \texttt{InvertedDoublePendulum-v4}, and 5,000 for \texttt{Hopper-v4}, \texttt{Walker2d-v4}, and \texttt{HalfCheetah-v4}. For the SAC and PPO policy optimization components, we reference implementations from the \texttt{CleanRL} repository \citep{huang2022cleanrl}. Our implementation of Dreamer-v2 and RSSM are based on SheepRL \citep{EclecticSheep_and_Angioni_SheepRL_2023}.

\begin{table}[ht]
\centering
\caption{Hyper-parameters table. \label{appendix:parameter}}
\begin{tabular}{|cc|}
\hline
Hyper-parameter                    & Value          \\ \hline
Seeds                              & 0, 5, 10       \\
Buffer Size                        & 1M              \\
Batch Size                         & 128            \\
Max Grad Norm                      & 10             \\
Starting Steps                     & 1,000/10,000   \\
Global Timesteps                   & 100k/1M        \\
Discount Factor                    & 0.99           \\ \hline
\multicolumn{2}{|c|}{Model-based Policy Optimization}              \\ \hline
Learning Rate for Actor            & 3e-4           \\
Learning Rate for Critic           & 3e-4           \\
Learning Rate for Model            & 3e-4           \\
Network Layers                     & 3              \\
Policy Network Neurons             & [64, 64]       \\
Critic Network Neurons             & [128, 128]     \\
Model Network Neurons              & [256, 256]     \\
Ensemble Numbers                   & 7              \\
Elites Numbers                     & 5              \\
Activation                         & Tanh(Policy)/ReLU\\
Optimizer                          & Adam           \\
Initial Entropy                    & $-|\mathcal{A}|$\\
Learning Rate for Entropy          & 3e-4           \\
Train Frequency for Actor          & 1              \\
Train Frequency for Critic         & 1              \\
Train Frequency for Model          & 1              \\
Synthetic and Real Data Mix Coef   & 0.5            \\
Horizon($H$)                            & 4         \\ \hline
\multicolumn{2}{|c|}{Adversarial Discriminator}  \\ \hline
Learning Rate                      & 3e-4           \\
$R$ Network Neurons                & [100, 100]     \\
$\phi$ Network Neurons             & [100, 100]     \\
Optimizer                          & Adam           \\ 
Loss                               & Binary Cross-Entropy \\ \hline
\end{tabular}
\end{table}

\newpage
\section{MuJuCo Graphical Results}
Below are the testing return diagrams from stochastic MuJoCo Environments. Since AIRL and GAIL use distinct environmental training steps from DAC and our method, ~\cref{fig:comparison_sample_efficiency} provides a clear comparison under \texttt{10M} landscape while the rest of the graphs show the sample efficiencies for all algorithms under \texttt{1M} landscape.  
\begin{figure*}[h]
    \centering
    \includegraphics[width=0.85\linewidth]{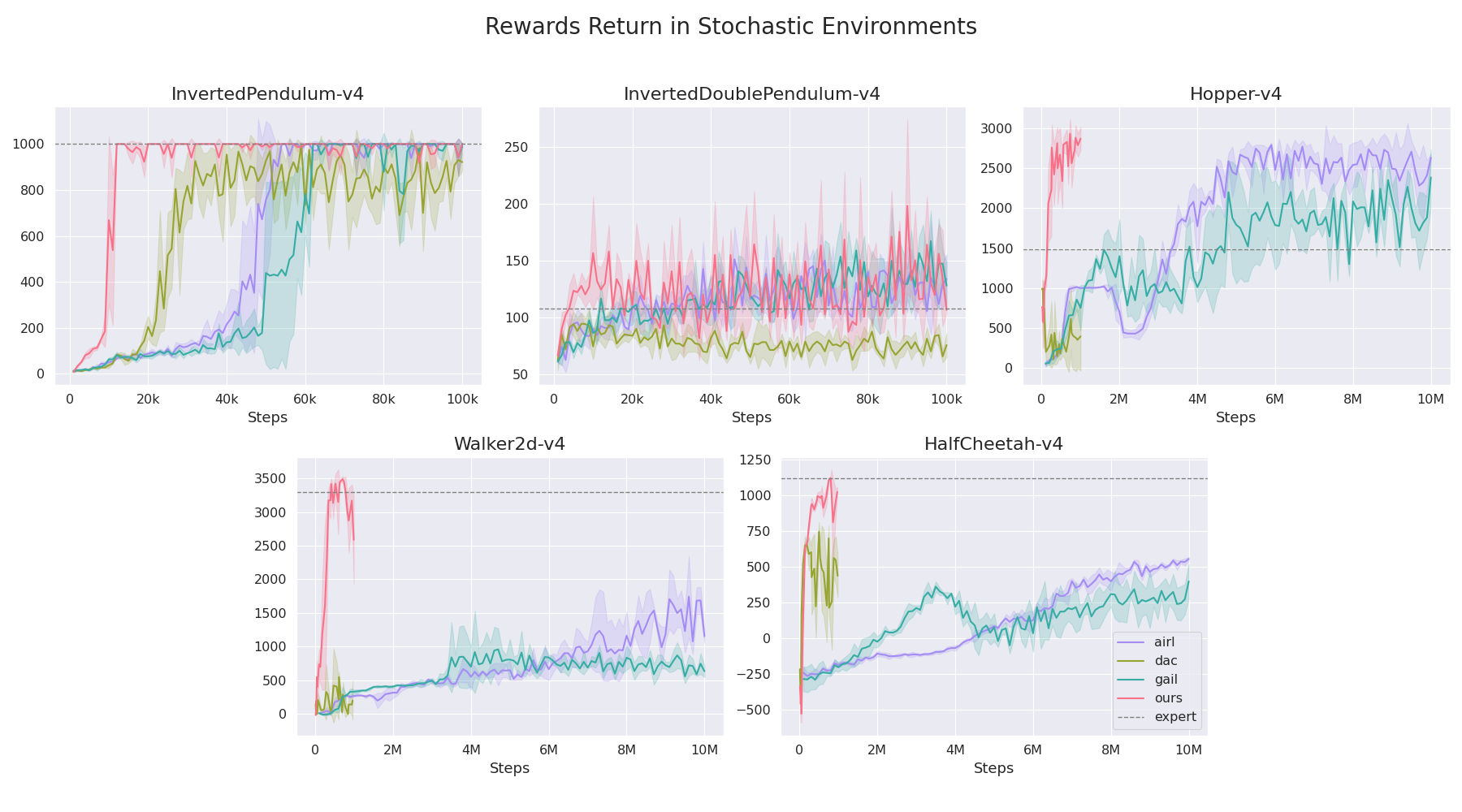}
    \caption{Training curves of all 4 methods in 5 different \textbf{stochastic} environments with 100 expert trajectories. For better comparison in sample efficiency, graph is presented under \texttt{10M} landscape.}
    \label{fig:comparison_sample_efficiency}
\end{figure*}
\label{deter_performance}

\newpage
\label{appendix:learning_curves}
\begin{figure}[h!]
    \centering
    \includegraphics[width=0.8\linewidth]{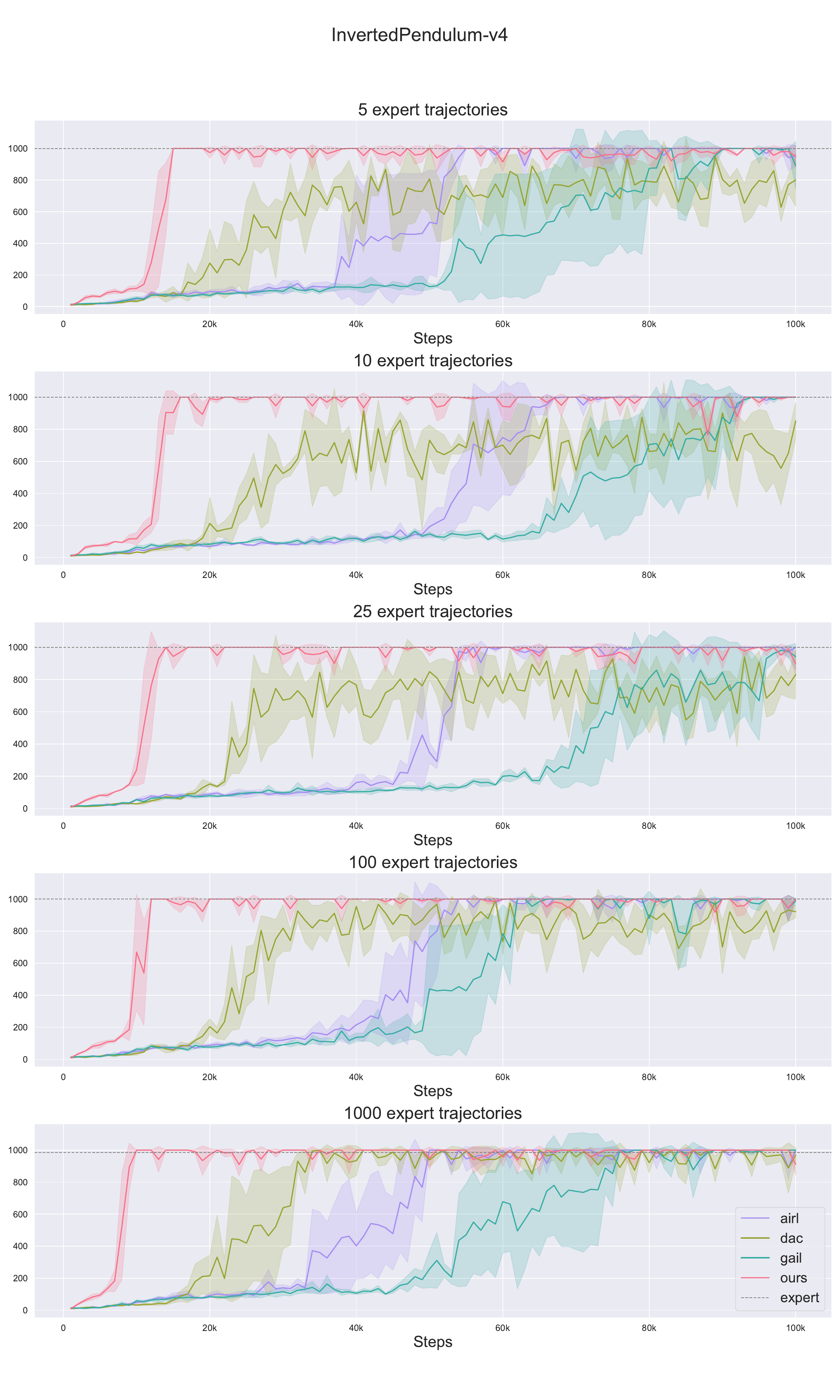}
    \caption{Training return diagram averaging across three seeds for different numbers of expert trajectories in Stochastic \texttt{InvertedPendulum-v4}.}
    \label{Fig:IP}
\end{figure}

\begin{figure}[h!]
    \centering
    \includegraphics[width=0.75\linewidth]{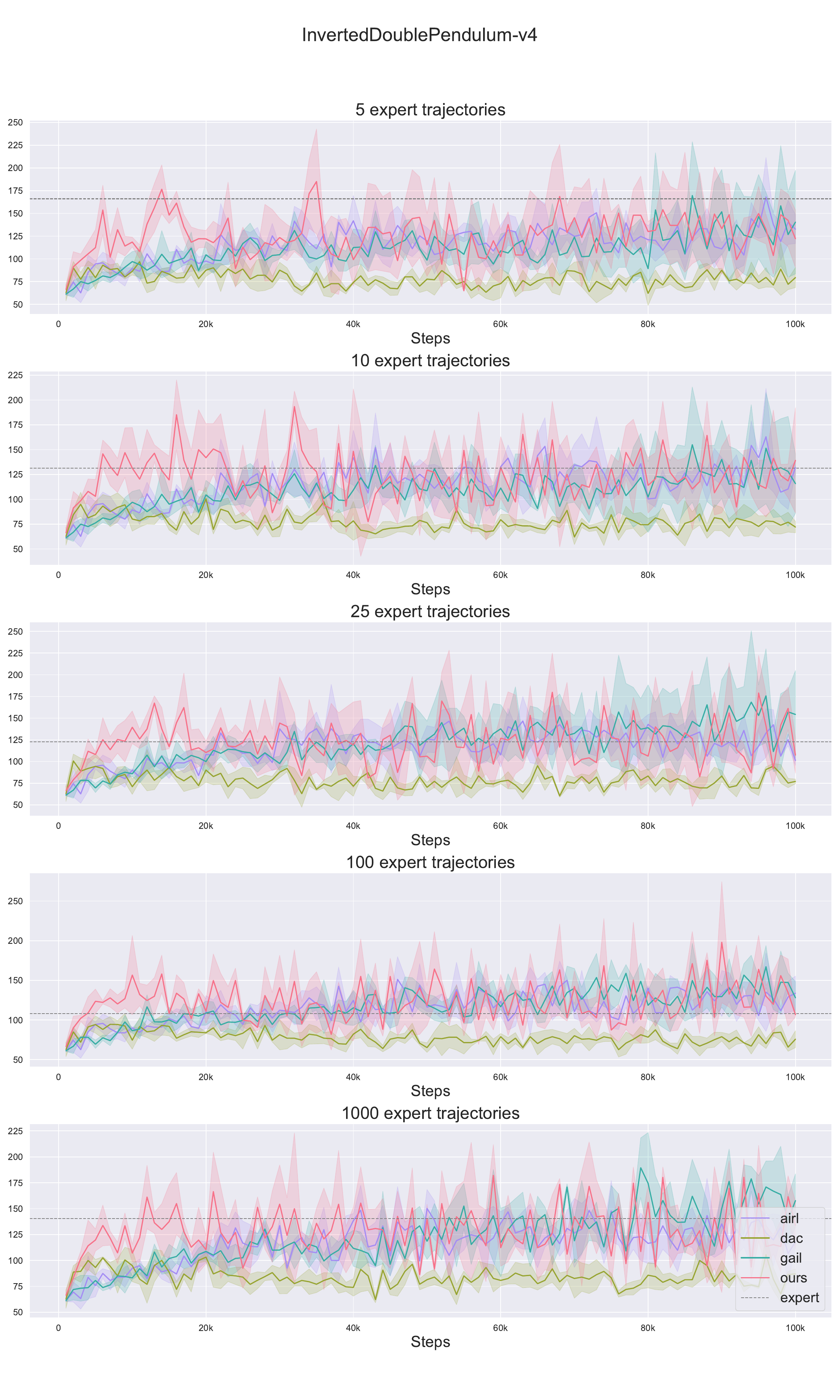}
    \caption{Training return diagram averaging across three seeds for different numbers of expert trajectories in Stochastic \texttt{InvertedDoublePendulum-v4}.}
    \label{Fig:IDP}
\end{figure}
\begin{figure}[h!]
    \centering
    \includegraphics[width=0.75\linewidth]{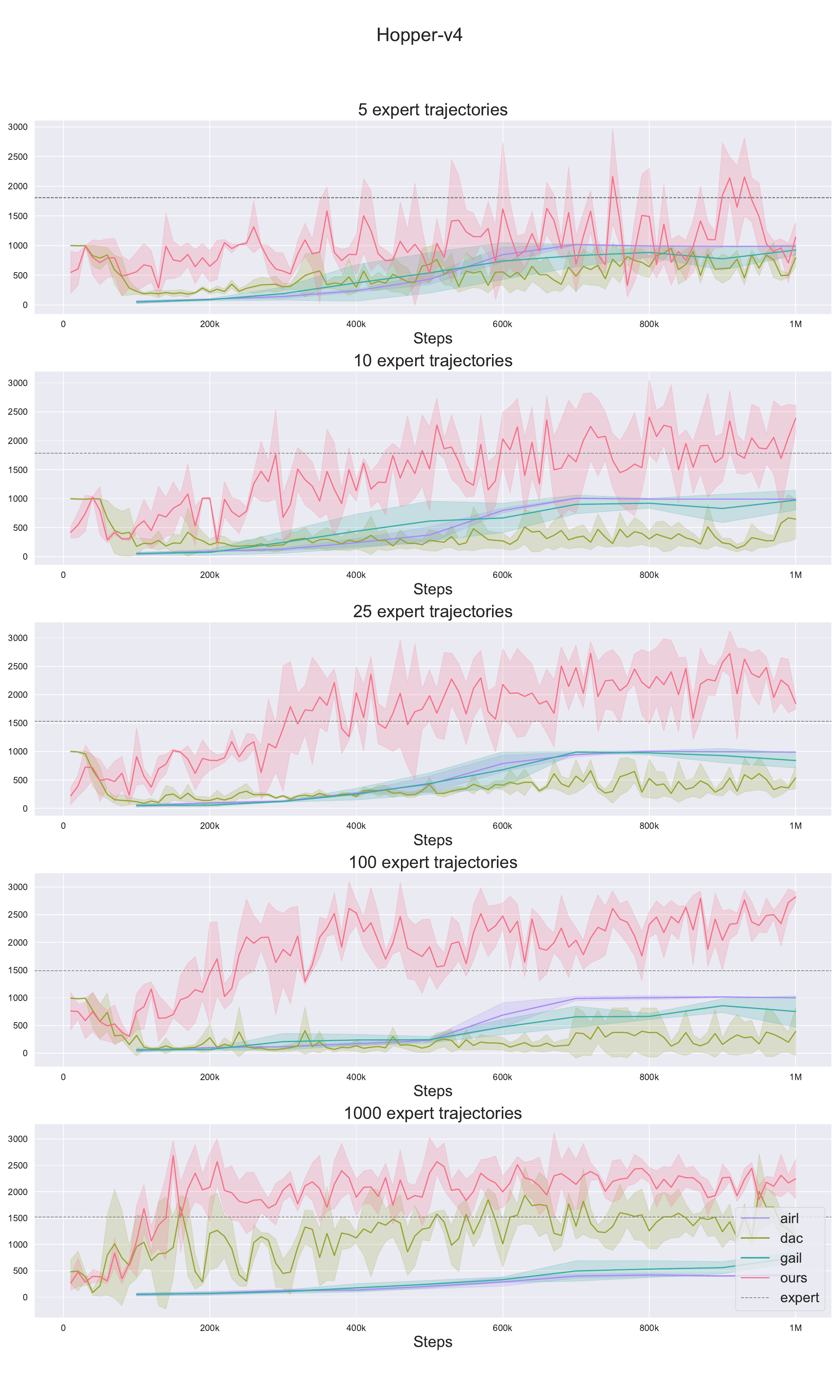}
    \caption{Training return diagram averaging across three seeds for different numbers of expert trajectories in Stochastic \texttt{Hopper-v4}.}
    \label{Fig:Hopper}
\end{figure}
\begin{figure}[h!]
    \centering
    \includegraphics[width=0.75\linewidth]{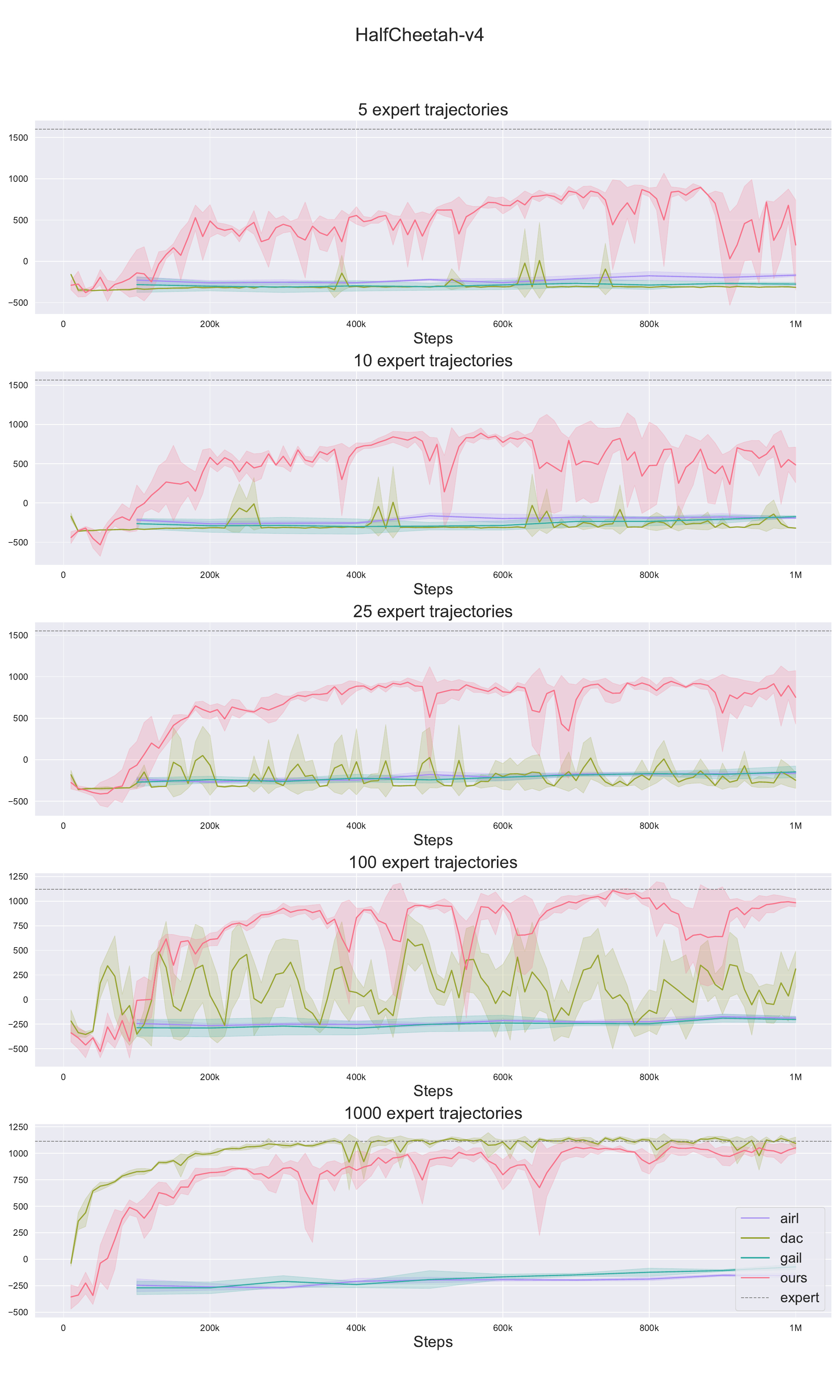}
    \caption{Training return diagram  averaging across three seeds for different numbers of expert trajectories in Stochastic \texttt{HalfCheetah-v4}.}
    \label{Fig:HalfCheetah}
\end{figure}
\begin{figure}[h!]
    \centering
    \includegraphics[width=0.75\linewidth]{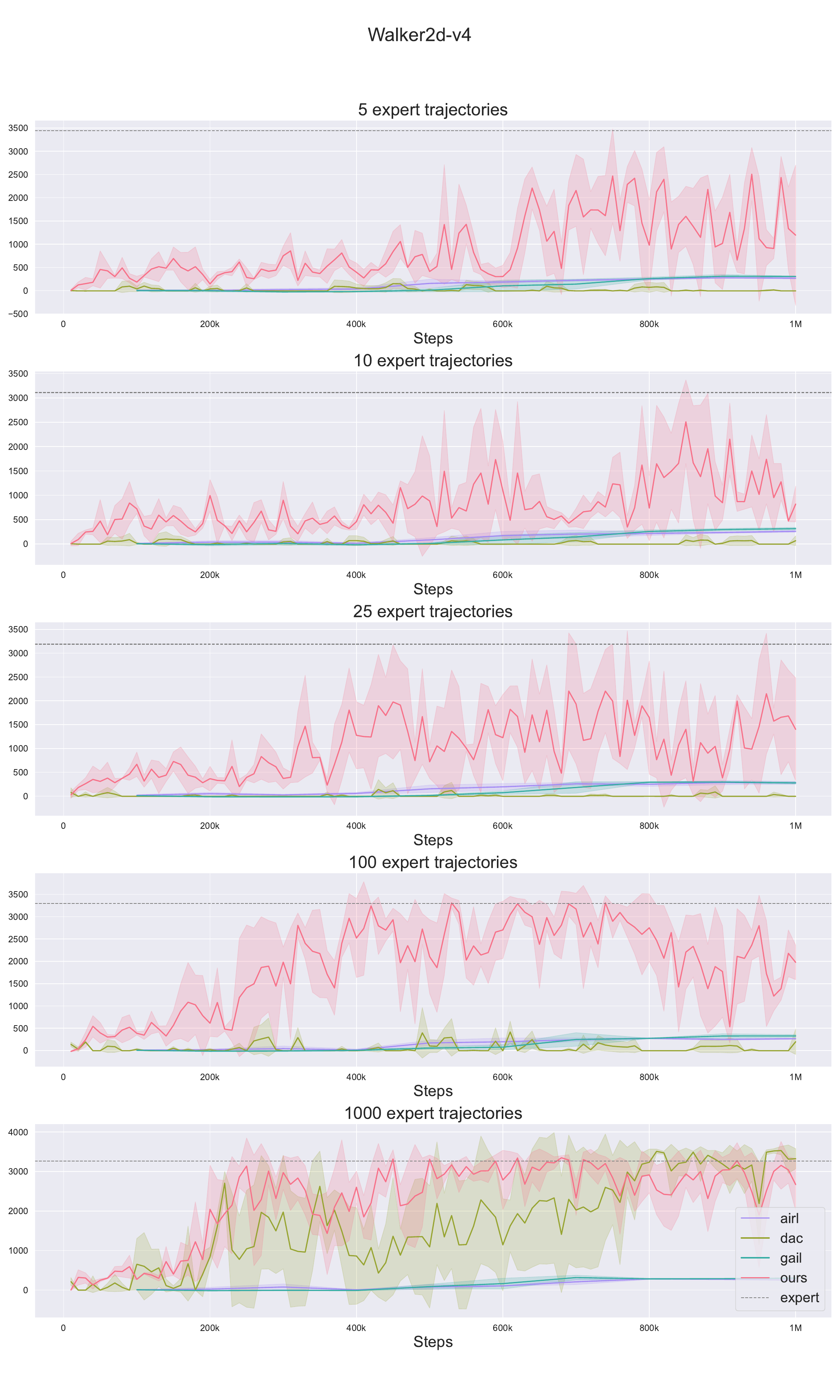}
    \caption{Training return diagram  averaging across three seeds for different numbers of expert trajectories in Stochsatic \texttt{Walker2d-v4}.}
    \label{Fig:Walker2d}
\end{figure}

\newpage
\section{Supplementary Material}
\subsection{Codespace}
To help make our results reproducible, we provide our implementation at the following anonymous link: 
\url{https://anonymous.4open.science/r/MBIRL-4C2F/README.md}

\begin{wraptable}[9]{r}{0.42\textwidth}  
  \vspace{-\baselineskip}             
  \centering
  \small                              
  \begin{tabular}{@{}lc@{}}           
    \toprule
    Method & Retern \\ \midrule
    Expert        & $22300.0_{\pm 3662.0}$ \\
    CNN-AIRL & $1200.0_{\pm 748.3}$ \\
    Ours       & $14400.0_{\pm 5607.1}$ \\ \bottomrule
  \end{tabular}
  \caption{Mean return over 10 evaluation episodes after \texttt{400k} steps on \texttt{BattleZone-v5}.}
  \label{tab:BattleZone_table}
\end{wraptable}
\subsection{Atari Experiment Details}
In this section, we explain the implementation details for the Atari experiments. We follow SheepRL \citep{EclecticSheep_and_Angioni_SheepRL_2023} for our implementation for the expert policy (Dreamer-v2 \citep{hafner2020mastering}) and the baseline IRL comparison (CNN-AIRL\citep{tucker2018inverse}). As for our own algorithm, we replace the ensemble dynamic model with RSSM\cite{hafner2019dream} to capture the high-dimensional inputs from Atari environments. To address the discrete action space, we use categorical distributions as actor networks for all algorithms and change the policy update logic accordingly. Due to computation resource constraint, we test our algorithms on two distinct environments - \texttt{SpaceInvaders-v5} and \texttt{BattleZone-v5} instead of the full set of Atari2600 environments. All algorithms are trained for \texttt{400k} steps. We report the mean return for \texttt{SpaceInvaders-v5} in \cref{tab:atari_table} and \texttt{BattleZone-v5} in \cref{tab:BattleZone_table}. The hyper-parameters of the RSSM can be found in \cref{appendix:atari-world-model}.

\begin{table}[ht]
\centering
\caption{RSSM hyper-parameters for Atari experiments.\label{appendix:atari-world-model}}
\begin{tabular}{|cc|}
\hline
\multicolumn{2}{|c|}{Policy Optimization for discrete actor} \\ \hline
Discrete Latent Size              & 32                                                    \\
Stochastic Latent Size            & 32                                                    \\
KL Balancing $\alpha$             & 0.8                                                   \\
Free Nats                         & 1.0 (averaged)                                        \\
KL Regulariser Scale              & 1.0                                                   \\ \hline
\multicolumn{2}{|c|}{Encoder / Observation Model} \\ \hline
CNN Channel Multiplier            & 48                                                    \\
MLP Layers                        & 4                                                     \\
Dense Units                       & 400                                                   \\
Activation                        & ELU / ELU                                             \\
Layer Norm                        & False                                                 \\ \hline
\multicolumn{2}{|c|}{Recurrent Model}          \\ \hline
Recurrent State Size              & 600                                                   \\
Dense Units                       & 400                                                   \\
Activation                        & ELU                                                   \\
Layer Norm                        & True                                                  \\ \hline
\multicolumn{2}{|c|}{Transition \& Representation Models} \\ \hline
Hidden Size                       & 600                                                   \\
Activation                        & ELU                                                   \\
Layer Norm                        & False                                                 \\ \hline
\multicolumn{2}{|c|}{Discount Model} \\ \hline
Dense Units                       & 400                                                   \\
Activation                        & ELU / ELU                                             \\
Layer Norm                        & False                                                 \\
\hline
\end{tabular}
\end{table}
\newpage

\end{document}